\documentclass{article}
% For theorems and such
\usepackage{amsmath, amsfonts}
\usepackage{amssymb}
\usepackage{mathtools}
\usepackage{amsthm}
\usepackage{bbm}
\usepackage{wrapfig}

%%%%%%%%%%%%%%%%%%%%%%%%%%%%%%%%
% THEOREMS
%%%%%%%%%%%%%%%%%%%%%%%%%%%%%%%%
\theoremstyle{plain}
\newtheorem{theorem}{Theorem}[section]

\theoremstyle{definition}
\newtheorem{definition}[theorem]{Definition}

\theoremstyle{remark}
\newtheorem{remark}[theorem]{Remark}
\usepackage{float}
\usepackage{amsmath}
\usepackage{amsthm}
\usepackage{xcolor}
\usepackage{algorithm, algorithmic}
\usepackage{makecell}
\usepackage{adjustbox}

%\newcommand{\tsnote}[1]{{\bf{\color{brown}[Tamas: #1]}}}

% Todonotes is useful during development; simply uncomment the next line
%    and comment out the line below the next line to turn off comments
%\usepackage[disable,textsize=tiny]{todonotes}
\usepackage[textsize=tiny]{todonotes}

% if you need to pass options to natbib, use, e.g.:
    \PassOptionsToPackage{square,sort,comma,numbers}{natbib}
% before loading neurips_2022

% ready for submission
% \usepackage{neurips_2023}

% to compile a preprint version, e.g., for submission to arXiv, add add the
% [preprint] option:
%     \usepackage[preprint]{neurips_2022}

% to compile a camera-ready version, add the [final] option, e.g.:
\usepackage[final]{neurips_2023}

% to avoid loading the natbib package, add option nonatbib:
%     \usepackage[nonatbib]{neurips_2022}

\usepackage[utf8]{inputenc} % allow utf-8 input
\usepackage[T1]{fontenc}    % use 8-bit T1 fonts
\usepackage{hyperref}       % hyperlinks
\usepackage{url}            % simple URL typesetting
\usepackage{booktabs}       % professional-quality tables
\usepackage{amsfonts}       % blackboard math symbols
\usepackage{nicefrac}       % compact symbols for 1/2, etc.
\usepackage{microtype}      % microtypography
\usepackage{xcolor}         % colors

% if you use cleveref..
\usepackage[capitalize,noabbrev]{cleveref}

\title{Mnemosyne: Learning to Train Transformers with Transformers}

% The \author macro works with any number of authors. There are two commands
% used to separate the names and addresses of multiple authors: \And and \AND.
%
% Using \And between authors leaves it to LaTeX to determine where to break the
% lines. Using \AND forces a line break at that point. So, if LaTeX puts 3 of 4
% authors names on the first line, and the last on the second line, try using
% \AND instead of \And before the third author name.

\author{%
  Deepali Jain \thanks{equal contribution} \\
  Google DeepMind,\\
  \texttt{jaindeepali@google.com} \\
  % examples of more authors
   \And
   Krzysztof Choromanski $^{*}$ \\
   Google DeepMind and Columbia University,\\
   \texttt{kchoro@google.com} \\
   \And
   Avinava Dubey $^{*}$ \\
   Google Research,\\
   \texttt{avinavadubey@google.com} \\
   \And
   Sumeet Singh \\
   Google DeepMind,\\
   \texttt{ssumeet@google.com} \\
   \And
   Vikas Sindhwani \\
   Google DeepMind,\\
   \texttt{sindhwani@google.com } \\
   \And
   Tingnan Zhang \\
   Google DeepMind,\\
   \texttt{tingnan@google.com } \\
   \And
   Jie Tan \\
   Google DeepMind,\\
   \texttt{jietan@google.com } \\
  % \AND
  % Coauthor \\
  % Affiliation \\
  % Address \\
  % \texttt{email} \\
  % \And
  % Coauthor \\
  % Affiliation \\
  % Address \\
  % \texttt{email} \\
  % \And
  % Coauthor \\
  % Affiliation \\
  % Address \\
  % \texttt{email} \\
}

\begin{document}

\maketitle

\begin{abstract}
In this work, we propose a new class of learnable optimizers, called \textit{Mnemosyne}. It is based on the novel spatio-temporal low-rank implicit attention Transformers that can learn to train entire neural network architectures, including other Transformers, without any task-specific optimizer tuning.
We show that Mnemosyne: (a) outperforms popular LSTM optimizers (also with new feature engineering to mitigate catastrophic forgetting of LSTMs), (b) can successfully train Transformers while using simple meta-training strategies that require minimal computational resources, 
(c) matches accuracy-wise SOTA hand-designed optimizers with carefully tuned hyper-parameters (often producing top performing models). Furthermore, Mnemosyne provides space complexity comparable to that of its hand-designed first-order counterparts, which allows it to scale to training larger sets of parameters. We conduct an extensive empirical evaluation of Mnemosyne on: (a) fine-tuning a wide range of Vision Transformers (ViTs) from medium-size architectures to massive ViT-Hs (36 layers, 16 heads), (b) pre-training BERT models and (c) soft prompt-tuning large 11B+ T5XXL models. 
We complement our results with a comprehensive theoretical analysis of the compact associative memory used by Mnemosyne which we believe was never done before.
\end{abstract}

\vspace{-6mm}
\section{Introduction}
\label{sec:intro}
\vspace{-2.8mm}
Learning-to-learn (L2L) systems \citep{thrun-l2l, sutton, Bengio1992OnTO, Bengio1995OnTO, naik, hochreiter-ltl, santoro, younger, l2l-lstm, l2l-tutorial} used to train machine learning (ML) optimizers can be thought of as a natural lifting (to the optimizer-space) of the idea that has revolutionized ML: replacing hand-engineered features with the learned ones. 

The idea to train ML optimizers is tempting, yet the lift to the optimizer-space comes at a price: the instances used to train such systems are the optimization problems themselves. Generalization in such a setting means: the ability to transfer knowledge to "similar" optimization tasks not seen in training. Rigorous mathematical analysis of the properties of L2L systems, that involves defining distributions over optimization problems, becomes challenging and is a subject on its own. Indeed, the literature on \textit{meta-learning} is voluminous \citep{ml-1, ml-2, ml-3, ml-4, ml-5, ml-6, ml-7} and of critical importance in many disciplines such as Robotics, where  transfer knowledge from simulator to hardware is a notoriously difficult problem \cite{sim-to-real-2, sim-to-real, few-shot}.   

A standard approach to learning optimizers is to cast it as a \textit{sequential decision problem}, where a function $f$ called an \textit{optimizee} is optimized via another function $g_{\theta}$ (an optimizer)  with learnable parameters $\theta$. Function $f$ can take as input $\mathbf{x}$ the parameters of a neural network (NN) and output its corresponding test loss on a given task. The optimizer $g_{\theta}$ updates $\mathbf{x}$ as:
\vspace{-0.5mm}
\begin{equation}
\label{eq:x-update}
  \begin{cases} 
   \mathbf{x}_{0} \text{ } \leftarrow \text{optimization initial point,}\\
   \mathbf{x}_{t+1}=g_{\theta}(f,\mathbf{x}_{0},...,\mathbf{x}_{t})  & \text{if } t > 0
  \end{cases}
\end{equation}
Function $g_{\theta}$ is either trained by minimizing the meta-loss objective $\mathcal{L}_{\theta}$, which is usually a sum of $f$-losses over certain time horizons, with supervised/reinforcement learning \cite{keli} or in a completely supervised way, where it learns to imitate expert-optimizers \cite{clearning}. 
It usually does not act directly on the sequence $(\mathbf{x}_{0},...,\mathbf{x}_{t})$, but its processed-version, e.g. the sequence of the corresponding gradients $(\nabla f(\mathbf{x}_{0}),...,\nabla f(\mathbf{x}_{t}))$ if $f$ is differentiable (which does not need to be the case \cite{es-l2l}). In practice, the processed-versions often have much richer structure \cite{tasks-stab}: \textit{"...various rolling statistics including...loss data structures, the rolling momentum / rms terms, as well as the gradient clipping state..."}. 

In this paper, we abstract from the low-level design of the sequential inputs to $g_{\theta}$ (which is a subject on its own) and different training strategies of $g_{\theta}$. Our interest is in the core design of $g_{\theta}$ since it acts as a memory-based system. 
Indeed, most models of $g_{\theta}$ leverage recurrent NN cells, such as LSTMs \cite{l2l-lstm, l2l-lstm-2, few-shot}, that keep the history of the optimization-rollout in the form of a compact learnable latent state.
In addition, due to the fact that inputs $\mathbf{x}$ to $f$ are usually high-dimensional (e.g. neural networks' parameter-vectors), $g_{\theta}$ is often factorized to process independently different dimensions of $\mathbf{x}$ \cite{l2l-lstm}.  

The Transformer-revolution in ML set the stage for a very different way to model sequential data and in general: to model memory - the \textit{attention mechanism} \cite{vaswani, palm, gpt, gpt3, devlin, chen-firat}. It is natural to ask whether attention architectures can be used to replace LSTM memory cells in L2L systems.
Applying Transformers here is compelling - modeling long-range relationships over time by avoiding catastrophic forgetting (which  is what LSTMs struggle with, but Transformers are particularly good at) is especially relevant for optimizing highly non-convex objectives in deep learning.
Yet these benefits come at the cost of quadratic (in the sequence-length) space and time complexity.  

\textbf{Contributions.} We propose a new class of learnable optimizers, called \textit{Mnemosyne}. It is based on the novel spatio-temporal low-rank implicit attention Transformers that can learn to train entire neural network architectures, including other Transformers, without any task-specific optimizer tuning.
We show that Mnemosyne: (a) outperforms popular LSTM optimizers (also with new feature engineering to mitigate catastrophic forgetting of LSTMs), (b) can successfully train Transformers while using simple meta-training strategies leveraging minimal computational resources, 
(c) matches accuracy-wise SOTA hand-designed optimizers with carefully tuned hyper-parameters (often producing top performing models). 
As we show in Sec. \ref{sec:exp}, SOTA hand-designed optimizers are very sensitive to the hyperparameter choice: a hyperparameter selection that is optimal for one dataset might perform very poorly on another one. Thus the ability of the optimizer to automatically "implicitly learn" optimal hyperparameters is of critical practical importance. Furthermore, Mnemosyne provides space complexity comparable to that of its standard hand-designed first-order counterparts, which allows it to scale to training larger sets of parameters. We conduct an extensive empirical evaluation of Mnemosyne on: (a) fine-tuning a wide range of Vision Transformers (ViTs) from medium-size architectures to massive ViT-Hs (36 layers, 16 heads), (b) pre-training BERT models and (c) soft prompt-tuning large 11B+ T5XXL models. We also conduct a thorough theoretical analysis of the compact associative memory used by Mnemosyne, to the best of our knowledge never done before.

Mnemosyne leverages several algorithmic techniques: (a) efficient Transformers, called \textit{Performers} \citep{choromanski}, applying implicit low-rank attention and guaranteeing linear (in the history length for causal and parameter-tensor size for the spatial attention) time and space complexity, (b) bi-directional and uni-directional attention combined in the unified spatio-temporal system, (c) hierarchical spatial attention mechanism to further reduce memory footprint of the spatial attention encoders that can be thought of as a novel attention pooling mechanism (see: \ref{sec:pooling}). Obtained L2L mechanism effectively acts as a \textit{compact associative memory} (CAM) (see: Sec. \ref{sec:thm}) fed with latent representations summarizing groups of parameters induced by the natural structure of the target model to be trained (the so-called \textit{topological encodings}, see: Sec \ref{sec:topological_encs}). It thus provides the best of both worlds: efficiency due to the compact fixed-size hidden state (as in LSTMs) and expressiveness since this hidden state approximates regular Transformer's attention via the CAM-mechanism. We also believe that this paper lays the groundwork for the research on general-purpose attention-based learnable optimizers and foundational models for L2L systems.

\section{Related work}
\label{sec:related}
\vspace{-3mm}
The research on L2L systems involves a plethora of techniques: curriculum learning (e.g. incremently increasing optimizer's unroll length \cite{clearning}), randomly scaled optimizees in training with relative scaling of input gradients \cite{kaifeng}, hierarchical RNN-architectures with
lower memory and compute overhead that are capable of capturing inter-parameter dependencies \cite{olga} and more \citep{swarms, l2l-robust, l2l-es, pes-paper}.

Regular Transformers are recently considered in this setting, in particular to tune hyperparameters \cite{optformer}, to learn BFGS-type optimization for motion reconstruction \cite{gartner2022transformer} or as memory-systems in class-incremental learning \cite{iscen-tr}. Scalable Transformers \citep{et-survey, long-range-arena} were designed to address computational limitations of their regular counterparts. Several efficient attention mechanisms were proposed, based on hashing \cite{reformer}, clustering \cite{routing}, dimensionality reduction \cite{linformer} or sparsity \cite{zaheer2020big}.

In this paper, we apply in particular methods approximating attention via low-rank decomposition of the attention matrix \citep{choromanski, crts, rpe-performers, topmasking, likhosherstov,tr-kernel}, due to their intrinsic connection with associative memory and energy-models \citep{krotov} (while used in the causal attention setting). Regular Transformers can be thought of as differentiable dictionaries applying powerful \textit{associative memory mechanisms} \cite{krotovhopfield,ramsauer}, i.e. modern Hopfield networks \cite{hopfieldnets} with exponential memory. Linear low-rank attention mechanisms are their compact variants \cite{schmidhuber} with intriguing theoretical properties and as such are perfect candidates for scalable memory-systems with respect to the temporal axis (Mnemosyne addresses also the problem of the efficient spatial encodings of the trainable parameters, see Sec. \ref{sec:topological_encs}). That interpretation has profound practical consequences as giving guidance on the optimal version of the low-rank attention mechanism. Mnemosyne uses the so-called \textit{hyperbolic cosine random features} (see: Sec. \ref{sec:cam_update}, \ref{sec:rfs}) providing particularly low variance of the softmax-kernel estimation, a key ingredient of those modern associative memory models.

% They were also applied to learn cost-functions for learnable-MPC \cite{performer-mpc}.

\vspace{-3mm}
\section{Learning to learn with spatio-temporal attention}
\label{sec:algorithm}
\vspace{-3mm}
In this section, we give the description of Mnemosyne. Since the system consists of several components, we start by providing a high-level overview. We then discuss individual components in more depth: topological (spatial) encoder in Sec. \ref{sec:topological_encs} and temporal (causal) encoder in Sec. \ref{sec:temporal_encs} .
\vspace{-3mm}
\subsection{Preliminaries: tree-like optimization domains}
\label{sec:preliminaries}
\vspace{-2mm}
Consider an optimizee $f: \mathcal{D} \rightarrow \mathbb{R}$. In the simplest case, we can take: $\mathcal{D} \subseteq \mathbb{R}^{d}$ for $d \in \mathbb{N}_{+}$. However it will be convenient to impose a tree-like structure on the elements $\mathbf{x} \in \mathcal{D}$. We will assume that there exists a tree $\mathcal{T}_{f}$ such that for every $\mathbf{x}=(x_{1},...,x_{d})^{\top} \in \mathcal{D}$, the set $\{x_{1},...,x_{d}\}$ is partitioned into non-empty subsets $S_{1},...,S_{l}$ corresponding to different leaves of $\mathcal{T}_{f}$, where $l$ stands for their number. Not only does $\mathcal{T}_{f}$ define partitioning of $\{x_{1},...,x_{d}\}$ through the subsets residing in different leaves, but it also imposes a natural hierarchy. The construction might seem artificial at first glance, but we have a good reason to introduce it early on - it is a predominant description of the NN structure (our main interest in this paper is to optimize NNs with learnable optimizers thus we identify elements of $\mathcal{D}$ with different instantiations of the particular NN model). In this context, different leaves correspond to individual tensors of parameters (e.g. weight-matrices or bias-vectors; here subsets $\mathcal{S}_{i}$ are just their flattened representations) and tree-induced hierarchy describes how the NN layers/modules emerge from those individual tensors \footnote{An example of the instantiation of this mechanism is a $\mathrm{pytree}$ structure that is a default representation of the neural network parameters used in $\mathrm{JAX}$ \citep{jax} - a popular machine learning framework used to transform numerical functions and train neural networks.}. This more general framework covers also the setting $\mathcal{D} \in \mathbb{R}^{d}$, where $\mathcal{T}_{f}$ is a star-tree with different leaves corresponding to different variables $x_{i}$. 
% {\huge $x_{i}$}
% {\huge $\Delta x_{i}$}
% {\huge $\mathbf{r}_{i}$}
% {\huge $\mathcal{S}_{\mathbf{T}}$}
% {\huge $\mathcal{M}$}
% {\huge $\mathcal{L}$}
% {\huge $\mathbf{e}$}
% {\huge $\mathcal{S}_{\mathbf{T}}^{\prime}$}
% {\huge $\Delta \mathbf{T}$}
\vspace{-3mm}
\subsection{Topological encoder}
\label{sec:topological_encs}
\vspace{-2mm}
Standard L2L systems \citep{l2l-lstm} act independently on the individual parameters to be optimized. We refer to this strategy as a \textbf{\textit{coordinate-wise}} approach. It has the advantage of enabling lots of data for meta-training (since the optimization trajectory of each scalar parameter is a viable training point) and corresponds to $\mathcal{T}_{f}$ being a star-tree (see: Sec. \ref{sec:preliminaries}). However it comes at a price of the memory footprint since space complexity becomes linear in the total number of trainable parameters. Mnemosyne can be successfully applied in the coordinate-wise framework (see: Sec.~\ref{sec:exp_coord}), but supports an arbitrary $\mathcal{T}_{f}$, in particular the natural variant, where leaves correspond to the individual tensors of the NN to be optimized and that we refer to as the \textbf{\textit{tensor-wise}} approach.

Mnemosyne acts independently on each tensor $\mathbf{T}$ from each leaf of a given input $\mathcal{T}_{f}$.  Tensor $\mathbf{T}$ is first flattened/vectorized and a representation vector $\mathbf{r}$ is associated with each parameter of $\mathbf{T}$. Since in this paper we minimize feature engineering for the L2L systems, our default choice for $\mathbf{r}$ is the $2$-dimensional vector of the absolute value of the gradient dimension and its sign (that was used on a regular basis in several papers on the subject), but we emphasize that Mnemosyne is agnostic to the particular representation choice. The resulting sequence of representations $\mathcal{S}_{\mathbf{T}}$ is then transformed by the bi-directional attention of the Performer model \citep{choromanski}. A latent encoding of a fixed token from that sequence is then output as a \textit{topological encoding} of $\mathbf{T}$. 
\vspace{-3mm}
\subsubsection{Compactifying topological encoder with the hierarchical pooling}
\label{sec:pooling}
\vspace{-1mm}
Using bi-directional linear attention from Performers is justified by the fact that sequence $\mathcal{S}_{\mathbf{T}}$ can be in practice very long. In fact it can easily surpass $1M$ tokens (for instance if $\mathbf{T}$ is a square weight-tensor corresponding to two consecutive layers of size $>1K$ each). In that setting, even linear attention might not suffice.To address it, we introduce additional hierarchical pooling mechanism, leading to the \textit{hierarchical pooling encoder} (HPE). Sequence $\mathcal{S}_{\mathbf{T}}$ is first split into chunks of a fixed length $L \in \mathbb{N}_{+}$: $\mathcal{S}^{1}_{\mathbf{T}},\mathcal{S}^{2}_{\mathbf{T}},...$ (the last chunk might be shorter). Topological encoding is applied in parallel to each chunk. The procedure is repeated for the resulting sequence of the topological encodings, which is already shorter by the multiplicative factor of $L$ (potentially with a different $L$ even though here we assume hat $L$ is the same). The total number of repetitions is in practice a small constant $h_{\mathrm{pool}} \in \mathbb{N}$ and leads to the final
sequence of length $l=\frac{\mathrm{len}(\mathcal{S}_{\mathbf{T}})}{L^{h_{\mathrm{pool}}}}$, where $\mathrm{len}(\mathcal{S}_{\mathbf{T}})$ is the original length and $l$ as a small constant. If $\mathrm{len}(\mathcal{S}_{\mathbf{T}})$ is small enough, hierarchical pooling is not applied. The resulting $l$-length sequence of latent $d$-dimensional encodings is output as a topological encoding and fed to the temporal encoder defined below. We refer to different tokens of that sequence as \textit{meta-tokens}, $\mathcal{M}$.
\vspace{-3mm}
\subsection{Temporal encoder: Compact Associative Memory (CAM) model}
\label{sec:temporal_encs}
\vspace{-1mm}
\textbf{Preliminaries.}
The temporal module consists of one or more temporal encoders stacked together that process the meta-tokens, $\mathcal{M}$. It treats the length $l$ of the sequence $\mathcal{M}$ as a batch size $b$. For a given meta-token, denote by $\{\xi^{\mu}\}_{\mu=1}^{M} \subseteq \mathbb{R}^{d}$ its corresponding latent encodings obtained over time. We will refer to them here as \textit{memory-vectors} (patterns).
We obtain their latent embeddings: \textit{queries}, \textit{keys} and \textit{values} via learnable linear transformations $\mathbf{W}_{Q},\mathbf{W}_{K} \in \mathbb{R}^{N \times d}$, $\mathbf{W}_{V} \in \mathbb{R}^{d \times d}$ as follows: 
\begin{align}
\begin{split}
\mathbf{q}^{\mu} = \mathbf{W}_{Q}\xi^{\mu}, \textrm{  }\textrm{  }\textrm{  }
\mathbf{k}^{\mu} = \mathbf{W}_{K}\xi^{\mu}, \textrm{  }\textrm{  }\textrm{  }
\mathbf{v}^{\mu} = \mathbf{W}_{V}\xi^{\mu}
\end{split}
\end{align}
Take a kernel $\mathrm{K}:
\mathbb{R}^{N} \times \mathbb{R}^{N} \rightarrow \mathbb{R}$, and its linearization:
$\mathrm{K}(\mathbf{x},\mathbf{y})=\mathbb{E}[\phi(\mathbf{x})^{\top}\phi(\mathbf{y})]$ for some (randomized) $\phi:\mathbb{R}^{N} \rightarrow \mathbb{R}^{r}$. We refer to $\phi(\mathbf{x}),\phi(\mathbf{y})$ as \textit{random feature} (RF) vectors and define a \textit{hidden state} encapsulating memory of the system of first $t$ patterns as:
\begin{equation}
\label{eq:mnemosyne-memory}
  \begin{cases} 
  \mathbf{N}_{t} = \sum_{\mu=1}^{t} \lambda_{t}(\mu) \phi(\mathbf{k}^{\mu})(\mathbf{v}^{\mu})^{\top} \in \mathbb{R}^{r \times d}, \\
  \Psi_{t} = \sum_{\mu=1}^{t} \lambda_{t}(\mu) \phi(\mathbf{k}^{\mu}) \in \mathbb{R}^{r}
  \end{cases}
\end{equation}
A \textit{discount-function} $\lambda_{t}:\mathbb{R} \rightarrow \mathbb{R}$ is applied to deprioritize older patterns.

\subsubsection{Updating hidden states and the outputs of the temporal module}
\label{sec:cam_update}
Note that temporal encoder's hidden state $\mathbf{h}_{\mathrm{Mne}}(t)=(\mathbf{N}_{t}, \Psi_{t})$
is of size \textbf{independent} from the number of its implicitly stored patterns $t$. When patterns are added, $\mathbf{h}_{\mathrm{Mne}}(t)$ needs to be efficiently updated on-the-fly. It is easy to see that this can be done for the \textit{exponential discount} strategy: $\lambda_{t}(\mu)=\exp(-\tau(t-\mu))$ with $\tau \geq 0$ ($\tau=0$ turns off discounting). We have the following:
\begin{align}
\begin{split}
\mathbf{N}_{t+1} = \exp(-\tau) \cdot \mathbf{N}_{t} + \phi(\mathbf{k}^{t+1})(\mathbf{v}^{t+1})^{\top},\\
\mathbf{\Psi}_{t+1} = \exp(-\tau) \cdot \mathbf{\Psi}_{t} + \phi(\mathbf{k}^{t+1})
\end{split}
\end{align}
With the definition of the temporal encoder's hidden state, we can now explain how it acts on the input vectors $\xi \in \mathbb{R}^{d}$. New vector $\xi^{\prime}=\xi+\Delta \xi$ is obtained as follows, where $\mathbf{q}=\mathbf{W}_{Q}\xi$:
\begin{equation}
\label{eq:delta}
\Delta \xi = \frac{\mathbf{N}_{t}^{\top}\phi(\mathbf{q}) }{\phi(\mathbf{q})^{\top}\Psi_{t}}
= \sum_{\mu=1}^{t} \frac{\lambda_{t}(\mu) \phi(\mathbf{q})^{\top}\phi(\mathbf{k}^{\mu})}{\sum_{i=1}^{t} \lambda_{t}(i) \phi(\mathbf{q})^{\top}\phi(\mathbf{k}^{i})}\mathbf{v}^{\mu}
\end{equation}
Vector $\xi^{\prime}$ can be computed in time $O_{M}(1)$ and is given as a convex combination of value vectors $\mathbf{v}^{\mu}$, with coefficients proportional to approximated kernel values, but modulated by the discount-function. 

For $\mathbf{W}_{Q}=\mathbf{W}_{K}=\mathbf{W}_{V}=\mathbf{I}_{d}$, $\lambda_{t} \equiv 1$ and with exact kernel values $\mathrm{K}(\mathbf{q}, \mathbf{k}^{i})$ in Eq. \ref{eq:delta}  (rather than their approximated versions $\phi(\mathbf{q})^{\top}\phi(\mathbf{k}^{i})$), dynamical systems defined by Eq. \ref{eq:delta}  become effectively Hopfield networks and, as energy-based models with energies given as $E(\xi; \{\xi^{\mu}\}_{\mu=1}^{t})=-\sum_{\mu=1}^{t} \mathrm{K}(\xi,\xi^{\mu})$, retrieve memory-vectors upon energy-minimization-driven convergence \cite{ramsauer}. The retrieval quality depends on the kernel, with the softmax-kernel $\mathrm{K}(\mathbf{x},\mathbf{y})\overset{\mathrm{def}}{=}\exp(\mathbf{x}^{\top}\mathbf{y})$ providing particularly strong theoretical results. 
For arbitrary $\mathbf{W}_{Q},\mathbf{W}_{K},\mathbf{W}_{V}$, but still with $\lambda_{t} \equiv 1$ and exact kernel values, Eq. \ref{eq:delta} turns into regular Transformers' attention. Upon replacement of the exact kernel values with the approximate ones, Performer model is recovered.

We think about Eq. \ref{eq:delta} as a \textit{generalized} (since it uses $\lambda_{t}$) \textit{\textbf{c}ompact} (since it provides efficient hidden state and input update-rules via linearized kernels from Performers) \textit{\textbf{a}ssociative \textbf{m}emory} model (CAM) (as opposed to the regular associate memory model from Hopfield networks). 

In Mnemosyne, we choose softmax-kernel as $\mathrm{K}$, since it is a default choice for Transformers. We observed that the FAVOR++ mechanism defining $\phi$ from \cite{crts}, and denoted by us as $\phi_{F++}$, provides the most robust performance, but since it is harder to implement in the temporal encoder setting (see: discussion in Sec. \ref{sec:rfs}), in practice we apply the so-called \textit{hyperbolic cosine random features} from \cite{choromanski}, performing very similarly, as $\phi_{F++}$. More details including in particular exact definitions and ablation studies over different random feature mechanisms can be found in Sec. \ref{sec:rfs}, \ref{sec:app_rfs}.
\vspace{-3mm}
\subsection{Putting it all together}
\label{sec:params-tensors-params}
\vspace{-2mm}
Details of the hierachical pooling encoder (HPE) and the compact associative memory (CAM) modules are depicted in Fig.~\ref{fig:mnemo-st}. With all the components of Mnemosyne described, we present the complete design of Mnemosyne for two application modes: \textbf{coordinate-wise} and \textbf{tensor-wise}, shown in Fig.~\ref{fig:mnemo-cw-tw}.
In the coordinate-wise application, the enriched gradient input $\mathbf{r}_i$ (see: Sec. \ref{sec:topological_encs}) of every parameter $x_i$ is processed separately in parallel by a CAM module followed by an MLP layer to produce the update $\Delta x_i$. This makes the optimizer design simple and allows it to learn optimization from small-scale problems since each parameter input serves as a separate training example. In this setting, CAM stores a fix-sized memory state for each parameter thereby making the optimizer memory state scale linearly with the number of parameters.
In the tensor-wise application, every tensor $\mathcal{S}_{\mathbf{T}}$ in the parameter-tree is processed as a whole by an HPE to produce meta-tokens $\mathcal{M}$ for CAM. Now the CAM memory state becomes very compact because it stores a state for each of the (small number of) meta-tokens rather than every token in the original input sequence.
CAM output $\mathcal{L}$ has the same shape as $\mathcal{M}$. It is transformed into a fix-sized encoding $\mathbf{e}$ by a spatial attention encoder (SPE).
This encoding is broadcast to all the input tokens of the tensor via concatenations with vectors $\mathbf{r}$. Each resulting vector $\mathbf{r}^{\prime} = \mathbf{r} \odot \mathbf{e}$ is processed by a single MLP-layer to get the update tensor $\Delta \mathbf{T}$.
% Its output is either a single scalar defining the  update $\Delta X$ of the parameter or a pair of scalars: the first one defining the learning rate of the Adam step to be applied and the second one encoding additional step applied right after.
% which is broadcasted and concatenated with the input to generate an \textit{enriched} tensor representation $\mathcal{S}_{\mathbf{T}}^{\prime}$. It is then processed by an MLP to get the update tensor $\Delta X$.
% At every point of time, the output of the temporal module is the $l$-length sequence of the $d$-dimensional latent embeddings, $\mathcal{L}$. This sequence is then processed by one more spatial attention encoder to produce a single embedding $\mathbf{e}$ .  
\begin{figure}[h]
    \vspace{-3mm}
    \centering
    \includegraphics[width=0.99\textwidth]{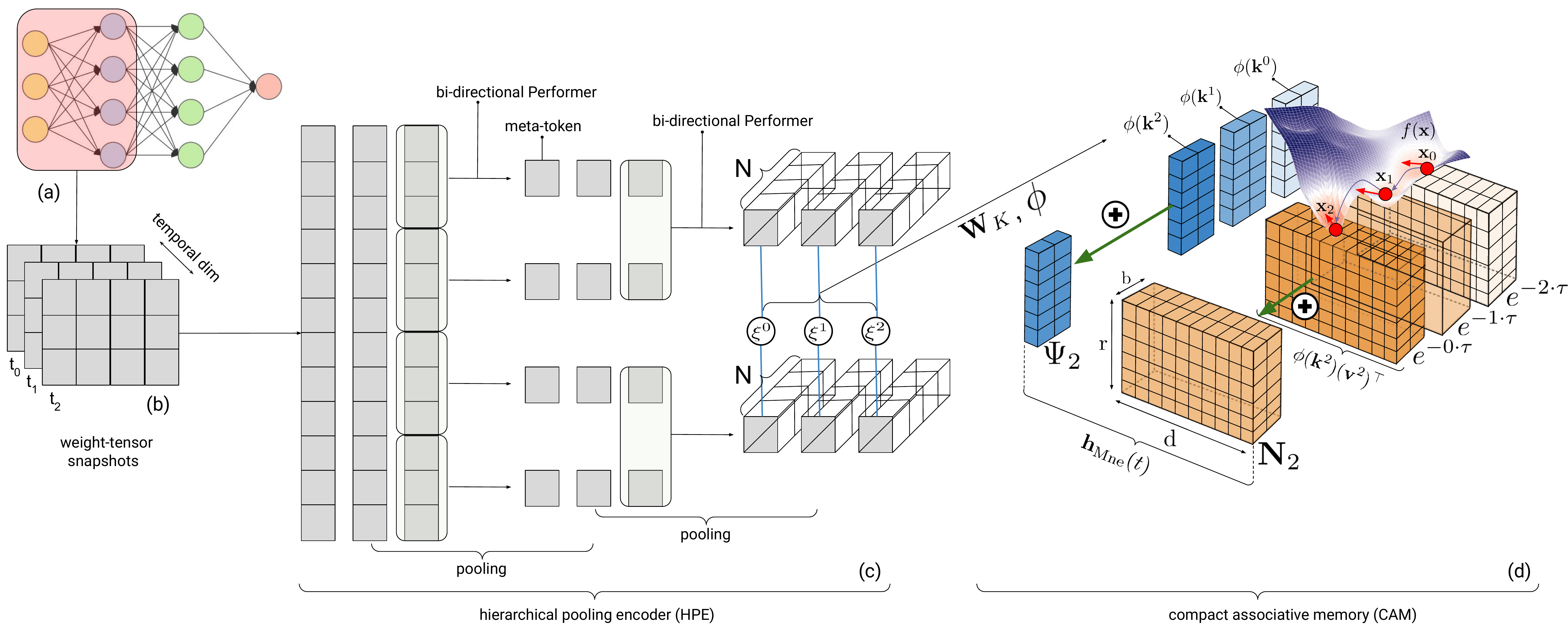}
    \caption{\small{Pictorial description of the hierarchical pooling encoding (HPE) and compact associative memory (CAM) in Mnemosyne on the example of modifying a single weight-tensor of a given NN. Consider training a single $3 \times 4$ weight tensor of a toy feedforward fully connected NN ((a)). Three snapshots of this tensor ((b)) represent its consecutive instantiations in the optimization process. HPE ((c)) acts as follows. Each tensor is vectorized and chunked into sub-sequences that are spatially encoded by the bi-directional Performers. Presented pooling mechanism consists of two layers. This results in the input tensors $\xi^{0},\xi^{1},\xi^{2}$ to CAM. Here the first dimension (the final number of meta-tokens) serves as a batch one ($b=2$) and $N=4$ (see: notation from Sec. \ref{sec:temporal_encs}). Those tensors are first linearly mapped via $\mathbf{W}$ matrices to keys (shown above) and queries and then non-linearly transformed via the $\phi$-mapping. The transformed variants are leveraged by the associative memory.}}
    \label{fig:mnemo-st}
    \vspace{-3mm}
\end{figure}

\begin{figure}[h]
    \vspace{-3mm}
    \centering
    \includegraphics[width=0.99\textwidth]{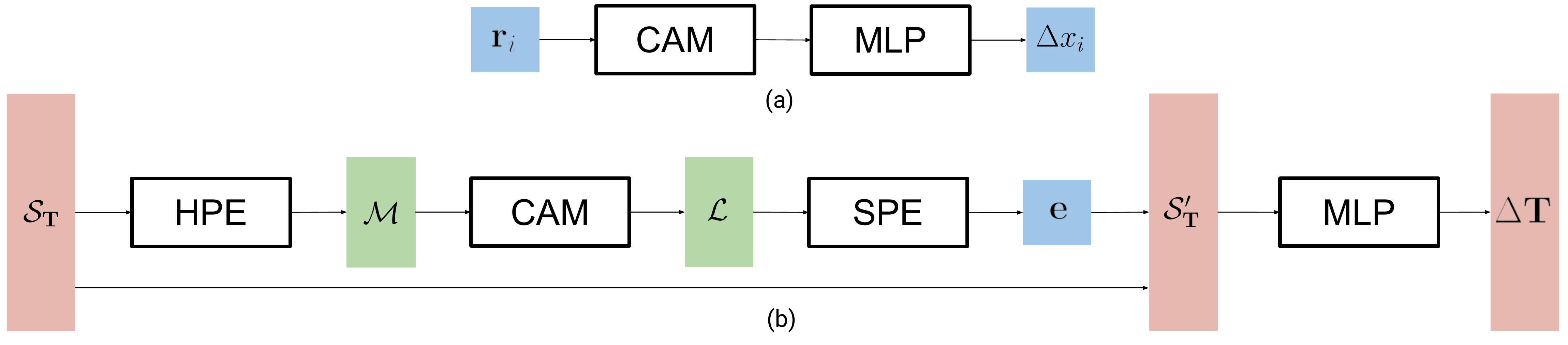}
\caption{\small{Two modes of Mnemosyne application: (a) \textbf{coordinate-wise} and (b) \textbf{tensor-wise}.}}
    \label{fig:mnemo-cw-tw}
    \vspace{-3mm}
\end{figure}
\vspace{-3mm}
\section{The theory of Mnemosyne's compact associative memory}
\label{sec:thm}
\vspace{-3mm}
In this section, we analyze Mnemosyne's compact associative memory (CAM) from the theoretical point of view. We show that, as its regular non-compact counterpart, it is capable of storing patterns, but (as opposed to the former) in the implicit manner. We start by providing a concise introduction to associative memory models as "analytical" (rather than combinatorial) energy-based nearest-neighbor  search systems. We then present our main result (Theorem \ref{thm:compact-associative}) stating that "on average" CAMs can restore exponentially many (in space dimensionality) number of patterns, provided that they are spread "well enough". To the best of our knowledge, this is the first result of this type. 
\vspace{-3mm}
\subsection{Regular exponential associative memory}
\vspace{-2mm}
As in several other papers providing a theoretical analysis of the associative memory, we consider feature vectors taken from the set $\{-1,+1\}^{N}$. We denote by $\{\xi^{\mu}\}_{\mu=1}^{M}$ the set of all the memory-vectors to be stored. For the given input $\xi$, in the regular exponential associative memory model, the energy of the system is defined as:
\vspace{-3mm}
\begin{equation}
E_{\mathrm{reg}}(\xi;\xi^{1},...\xi^{M}) = -\sum_{\mu=1}^{M} \exp(\xi^{\top}\xi^{\mu}).  \end{equation}
The dynamical system defining the interactions with the associative memory and whose goal is to retrieve the relevant memory for the given input vector $\sigma \in \{-1,+1\}^{N}$ (its nearest neighbor in $\{\xi\}_{\mu=1}^{M}$) has the following form:
$
\sigma \rightarrow  T_{i_{1}}(\sigma) \rightarrow T_{i_{2}}(T_{i_{1}}(\sigma)) \rightarrow ..., 
$
for the initial point $\sigma$, where $i_{1},i_{2},...$ are chosen independently at random and $T_{j}: \{-1,+1\}^{N} \rightarrow \{-1,+1\}^{N}$ only updates the jth entry of its input as follows (for $\xi[j;x]$ denoting vector $\xi$, but with its jth dimension equal to $x$):
\begin{align}
\begin{split}
T_{j}(\xi)[j] = \mathrm{sgn}[E_{\mathrm{reg}}(\xi[j;-1];\xi^{1},...\xi^{M})- E_{\mathrm{reg}}(\xi[j;1];\xi^{1},...\xi^{M})]    
\end{split}
\end{align}
Thus at every step, a random dimension of the input vector is chosen and its value is flipped if that operation decreases the energy of the system. 

\begin{definition}[the capacity of the associative models]
We say that the model described above stores memories $\{\xi^{\mu}\}_{\mu=1}^{M}$ if there exists $\rho \in (0, \frac{1}{2})$ such that $T_{j}(\widehat{\xi}^{\mu})[j] = \xi^{\mu}_{j}$ for any $j$ and $\widehat{\xi}^{\mu}$ taken from the Hamming ball $\mathcal{B}(\xi^{\mu},\rho N)$ centered in $\xi^{\mu}$ and of Hamming radius $\rho N$.
\end{definition}

It was proven in \citep{Demircigil_2017} that if the memories are chosen uniformly and independently at random from $\{-1,+1\}^{N}$, then with probability approaching $1$ as $N \rightarrow \infty$ the model stores all the memories as long as the memory-set is not too large. Most importantly, the upper bound on the memories-set size is exponential in $N$.

\begin{remark}
Despite the exponential capacity of the model, all its memories need to be explicitly stored (to compute the energy-function) and compute time is proportional to their number. Thus for large number of memories, the space and time complexity makes the retrieval infeasible in practice.
\end{remark}

% or from $\sqrt{N} \cdot \mathrm{Unif}(\mathcal{S}^{N-1})$, where 
% $\mathrm{Unif}(\mathcal{S}^{N-1})$ stands for the uniform distribution on the $(N-1)$-dimensional sphere in $\mathbb{R}^{N}$.

\subsection{Mnemosyne's Compact Associative Memory (CAM)}
\label{sec:mnem_compact}
We denote by $\omega_{1},\omega_{2},...\omega_{r}$ samples chosen independently from the multivariate Gaussian distribution $\mathcal{N}(0,\mathbf{I}_{N})$ 
and define the energy of the system as follows (see: Sec. \ref{sec:cam_update}, \ref{sec:rfs}):
\begin{equation}
E_{\mathrm{rand}}(\xi; \xi^{1},...\xi^{M}) = \phi_{F++}(\xi)^{\top}\mathbf{M}(\xi^{1},...,\xi^{M}),
\end{equation}
where $\mathbf{M}$ is given as:
%{\small
%\begin{equation} \vspace{-2mm}
$\mathbf{M}(\xi^{1},...,\xi^{M}) = -\sum_{\mu=1}^{M}\phi_{F++}(\xi^{\mu}).$
%\end{equation}    
%}
We refer to the number of random projection vectors $\omega$ as the number of \textit{ random features} (RFs). 

Equipped with this new energy function, we define the corresponding dynamical system in the same way as for the regular associative memory model. Calculating energy $E_{\mathrm{rand}}$ can be now done efficiently in time $O(r)$, once vector $\mathbf{M}$ is computed (thus independently from the number of implicitly stored memories). In the online/streaming setting, when the memories come one by one, updating vector $\mathbf{M}$ can be done in time $O(Nr)$ per query.

%(the $\sqrt{N} \cdot \mathrm{Unif}(\mathcal{S}^{N-1})$ distribution can be thought of as the proxy of the Gaussian distribution, where samples have %deterministic lengths, their squared lengths are the same as the expected squared lengths of the Gaussian vectors and directions remain to be chosen %uniformly at random)

\vspace{-3mm}
\subsection{The capacity of Mnemosyne's memory}

We are ready to present our main theoretical result.  
We assume the setting from Sec. \ref{sec:mnem_compact}. The extended version of this result, providing in addition concentration results, is given in Sec. \ref{sec:proof-one}.
\begin{theorem}[storage of compact associative memories]
\label{thm:compact-associative}
Denote by $\xi^{1},...,\xi^{M} \in \{-1,+1\}^{N}$ the memory-vectors. Assume that the Hamming distance between any two memory-vectors is at least $\tau N$ for some $\tau>0$. Take some $0 < \rho < \frac{\tau}{2}$. Then the following is true for any memory-vector $\xi^{l}$ for $l=1,...,\mu$ and any input $\widehat{\xi}^{l} \in \mathcal{B}(\xi^{l},\rho N)$ as long as $M \leq \exp(2N(\tau-2\rho))\frac{1-e^{-2}}{2e^{2}}$: the expected change of the energy of the compact associative memory system $\Delta(E_{\mathrm{rand}})$ associated with flipping the value of the dimension of $\widehat{\xi}^{l}$ is positive if that operation increases the distance from its close neighbor $\xi^{l}$ and is negative otherwise.
\end{theorem}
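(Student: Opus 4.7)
The plan is to reduce the claim about $\mathbb{E}[\Delta E_{\mathrm{rand}}]$ to a deterministic claim about $\Delta E_{\mathrm{reg}}$, and then to isolate the contribution of the target memory $\xi^{l}$ from those of the remaining $M-1$ memories using the separation hypothesis.

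First, since $\phi_{F++}$ is an unbiased estimator of the softmax kernel, i.e.\ $\mathbb{E}[\phi_{F++}(x)^{\top} \phi_{F++}(y)] = \exp(x^{\top} y)$, linearity of expectation gives
\begin{equation*}
\mathbb{E}\bigl[E_{\mathrm{rand}}(\xi;\xi^{1},\ldots,\xi^{M})\bigr] \;=\; -\sum_{\mu=1}^{M} \exp(\xi^{\top} \xi^{\mu}) \;=\; E_{\mathrm{reg}}(\xi;\xi^{1},\ldots,\xi^{M}),
\end{equation*}
so $\mathbb{E}[\Delta E_{\mathrm{rand}}] = \Delta E_{\mathrm{reg}}$. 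It therefore suffices to show that under the stated bound on $M$, $\Delta E_{\mathrm{reg}}$ is positive (respectively negative) when the coordinate flip of $\widehat{\xi}^{l}$ moves it away from (respectively toward) $\xi^{l}$.

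For the core inequality, fix $\widehat{\xi}^{l} \in \mathcal{B}(\xi^{l}, \rho N)$ and a coordinate $j$. Because entries are $\pm 1$, a flip of coordinate $j$ changes every inner product $\widehat{\xi}^{l} \cdot \xi^{\mu}$ by exactly $\pm 2$, with sign determined by whether the flip brings $\widehat{\xi}^{l}$ closer to or farther from $\xi^{\mu}$. Split $\Delta E_{\mathrm{reg}} = -\sum_{\mu} \Delta \exp(\widehat{\xi}^{l} \cdot \xi^{\mu})$ into the target term $\mu = l$ and the off-target terms $\mu \neq l$. Using $d_{H}(\widehat{\xi}^{l}, \xi^{l}) \leq \rho N$, the target summand contributes, in absolute value, at least $(1 - e^{-2}) \exp(N(1 - 2\rho))$, with sign matching the desired sign of $\Delta E_{\mathrm{reg}}$. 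For the off-target terms, the triangle inequality combined with the separation hypothesis yields $d_{H}(\widehat{\xi}^{l}, \xi^{\mu}) \geq (\tau - \rho) N$ before the flip and at least $(\tau - \rho) N - 1$ after it, so the larger of the pre- and post-flip inner products is bounded by $N(1 - 2(\tau - \rho)) + 2$. The trivial estimate $|e^{a} - e^{b}| \leq e^{\max(a,b)}$ then gives $|\Delta \exp(\widehat{\xi}^{l} \cdot \xi^{\mu})| \leq e^{2} \exp(N(1 - 2(\tau - \rho)))$ for each $\mu \neq l$.

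The final step is a magnitude comparison: the target must dominate the aggregate off-target contribution. Demanding a factor-of-two safety margin gives
\begin{equation*}
(1 - e^{-2}) \exp(N(1 - 2\rho)) \;\geq\; 2 M \cdot e^{2} \exp(N(1 - 2(\tau - \rho))),
\end{equation*}
which rearranges to exactly $M \leq \exp(2 N (\tau - 2\rho)) \cdot \tfrac{1 - e^{-2}}{2 e^{2}}$, the hypothesis of the theorem, and forces $\Delta E_{\mathrm{reg}}$ to inherit the sign of the target term. The main obstacle is the off-target bookkeeping: one has to verify that the triangle-inequality estimate remains valid in both the pre- and post-flip configuration (this is where the extra $+2$ in the exponent, and hence the $e^{2}$, arises), and to correctly pair the tight $(1 - e^{-2})$ lower bound on the target term with the looser $e^{\max}$ upper bound on the off-target terms. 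The rest is elementary algebra on exponentials, with the $\tfrac{1}{2}$ slack reused by the concentration argument in the extended version of the theorem in Section~\ref{sec:proof-one}.
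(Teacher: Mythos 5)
Your proposal is correct and follows essentially the same route as the paper's proof: reduce to the deterministic regular-energy change via unbiasedness of $\phi_{F++}$, split the sum into the $\mu = l$ "signal" term (bounded below in magnitude by $(1-e^{-2})\exp(N(1-2\rho))$, the tight case being when the flip moves away from $\xi^{l}$) and the $\mu \neq l$ "noise" terms (bounded via the triangle inequality and the $\tau N$-separation, with the $+2$ in the exponent accounting for the post-flip Hamming distance), and then compare magnitudes. The only difference is cosmetic: you bound each noise summand by $e^{\max}$ rather than by the sum of the two exponentials as the paper does, which gains a factor of two, and you then re-impose that factor of two as a "safety margin" to land on the theorem's stated constant $\frac{1-e^{-2}}{2e^{2}}$; this is harmless, though your remark that the slack is "reused by the concentration argument" overstates what the extended theorem actually does (it only records a variance formula, not a concentration bound).
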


\normalsize{
\begin{remark}
Theorem \ref{thm:compact-associative}  says that the expected value of the change of the energy of the system has the correct sign even if the number of stored patterns is exponential in their dimensionality, provided that the patterns are well separated. By the analogous argument as the one from the proof of Theorem \ref{thm:compact-associative} in Sec. \ref{sec:proof-one}, a similar statement for the method applying other RF-mechanism for softmax-kernel estimation can be derived. However $\phi_{F++}$ provides \textbf{the smallest} variance among all competitors as the most accurate currently known mechanism for the unbiased softmax-kernel approximation.  
\end{remark}
}

%%%%%%%%%%%%%%%%%%%%%%%%%%%%%%%%%%%%%%%%%%%%%%%%%%%%%%%%%%%%%%%%%%%%%%%%%%%%%%%%%%%%%%%%%%%%%%%%%%%
%%%%%%%%%%%%%%%%%%%%%%%%%%%%%%%%%%%%%%%%%%%%%%%%%%%%%%%%%%%%%%%%%%%%%%%%%%%%%%%%%%%%%%%%%%%%%%%%%%%
%%%%%%%%%%%%%%%%%%%%%%%%%%%%%%%%%%%%%%%%%%%%%%%%%%%%%%%%%%%%%%%%%%%%%%%%%%%%%%%%%%%%%%%%%%%%%%%%%%%
%%%%%%%%%%%%%%%%%%%%%%%%%%%%%%%%%%%%%%%%%%%%%%%%%%%%%%%%%%%%%%%%%%%%%%%%%%%%%%%%%%%%%%%%%%%%%%%%%%%

% \newpage
\section{Experiments}
\label{sec:exp}
\vspace{-3mm}
This section is organized as follows. 
Sec. \ref{sec:exp_warmup} is a warm-up, where we provide initial comparison of Mnemosyne with several hand-designed optimizers and an LSTM-based learned optimizer baseline, on the tasks of training smaller NN architectures. Our results show that Mnemosyne consistently outperforms other variants and that popular $\mathrm{Adam}$ optimizer \citep{adam-opt} is the second best. Note also that $\mathrm{Adam}$ is an optimizer of choice for training large Transformer models. Therefore in the following sections, we focus on the detailed comparison of Mnemosyne with $\mathrm{Adam}$ for larger architectures. 
% We show the performance of $\mathrm{Adam}$ with different learning rates. 

In Sec. \ref{sec:exp_coord}, we test the \textbf{coordinate-wise Mnemosyne} for ViT fine-tuning and soft prompt-tuning 11B+ T5XXL models. Optimizer's memory scales linearly with the NN size for the coordinate-wise variant and $\mathrm{Adam}$. However, depending on the size and number of temporal attention layers in the coordinate-wise Mnemosyne, the linear multiplicative constant can be prohibitively large. This restriction is alleviated by the tensor-wise variant.
%
% In Sec. \ref{sec:exp_coord}, we test the \textbf{coordinate-wise Mnemosyne} for ViT fine-tuning and soft prompt tuning on 11B+ T5XXL. For this variant, the optimizer memory state scales linearly with the number of parameters in the optimizee network and so is the case with $\mathrm{Adam}$ as well. However, depending on the size and number of temporal attention layers in the coordinate-wise Mnemosyne architecture, the linear multiplicative constant can be prohibitively large in the case of memory constrained training of large models. 
%
In Sec.~\ref{sec:exp_tensor}, we present the results for the \textbf{tensor-wise Mnemosyne} on BERT MLM  pre-training and ViT fine-tuning. 
% Tensor-wise Mnemosyne scales with tensors, not parameters reducing the memory requirement. However it requires large-scale meta-training because each tensor is a training example.
The tensor-wise Mnemosyne's memory state scales with the number of tensors instead of the number of model parameters. However, this variant requires meta-training with larger tensors for best generalization results since now each tensor serves as a single training example instead of each parameter which was the case for the coordinate-wise variant.
To take advantage of the efficiency of the tensor-wise and the efficacy of the coordinate-wise variant, we present \textbf{Super-Mnemosyne}, in Sec.~\ref{sec:exp_super}, that merges them.
%Scaling up optimizer meta-training is out of the scope of this paper, therefore in Sec.~\ref{sec:exp_super}, we present \textbf{Super-Mnemosyne} that merges coordinate- and tensor-wise variants to optimize the memory usage.

% In Sec. \ref{sec:bert}, we showcase practical implications of using more memory-efficient tensor-wise Mnemosyne on the task of BERT pre-training text-Transformers. Finally, in Sec. \ref{sec:prompt}, we apply Mnemosyne for soft prompt-tuning 11B+ T5XXL models. Not only does Sec. \ref{sec:exp} provide detailed evaluation of Mnemosyne, but we furthermore show a comprehensive portfolio of strategies for applying attention-based optimizers to fine-tune, pre-train and soft prompt-tune Transformers. We believe that it is an important contribution on its own.

All Mnemosyne variants are meta-trained on small scale MLP and VIT training tasks for a short horizon of $100$ steps. More detailed description of the meta-training set-up is given in the Appendix (Sec:~\ref{app_sec:exp}) along with the details of the experiments in this section.
Additional results including: (a) comparison of the CAM mechanism with regular attention (Sec. \ref{sec:app_bfattention}), (b) studies over different RF-mechanisms (Sec. \ref{sec:app_rfs}), ablations over: (c) different discount factors and different number of random features for the CAM mechanism (Sec. \ref{sec:app_lambda-rf}) and (d) different depths of the temporal modules of Mnemosyne (Sec. \ref{sec:app_depth}) are also given in the Appendix.

% \subsection{Co-ordinate wise}
% \paragraph{Vit}
% \paragraph{Prompt}

% \subsection{Tensor-wise}
% \paragraph{ViT}
% \paragraph{Bert}

% \subsection{Super}

\vspace{-3mm}
\subsection{Warm-up: Mnemosyne vs other optimizers for smaller NN architectures}
\label{sec:exp_warmup}
\vspace{-3mm}
We start by comparing coordinate-wise Mnemosyne with standard optimizers: $\mathrm{Adam}$ \citep{adam-opt}, $\mathrm{RMSProp}$ \citep{rmsprop}, $\mathrm{SGD}$ \citep{sgd} as well as popular learnable optimizers using LSTMs~\cite{l2l-lstm}. All optimizers were tested on the task of training Vision Transformer (ViT) \cite{dosovitskiy} architectures not seen during meta-training. Considered ViT has $3$ attention and MLP layers of size $16$ and $2$ heads. Loss curves for image classification with ViTs on different datasets (MNIST, CIFAR10, CIFAR100) are shown as first three plots in Fig.~\ref{fig:mnemofig-warmup}. Minimal feature engineering techniques were applied. In the last plot of Fig. \ref{fig:mnemofig-warmup}, we inserted Mnemosyne into $\mathrm{VeLO}$ optimizer \citep{velo-metz} which originally used LSTMs. This learnable optimizer applies sophisticated feature engineering to mitigate the problem of catastrophic forgetting of the LSTM-cells. Note that we did not replicate their large-scale meta-training set-up, only the optimizer architecture and features. Mnemosyne outperforms its counterparts in all these scenarios. In particular, it successfully trains attention-based architectures on unseen tasks. 
% Additional similar results for training MLP architectures of different sizes and activation functions as well as more experimental details are presented in the Appendix (Sec. \ref{sec:app_warmup}). \textcolor{red}{It is not clear what we are describing in this section}

\begin{figure}[t]\vspace{-5mm}
    \vspace{-3mm}
    \centering
    \includegraphics[width=0.99\textwidth]{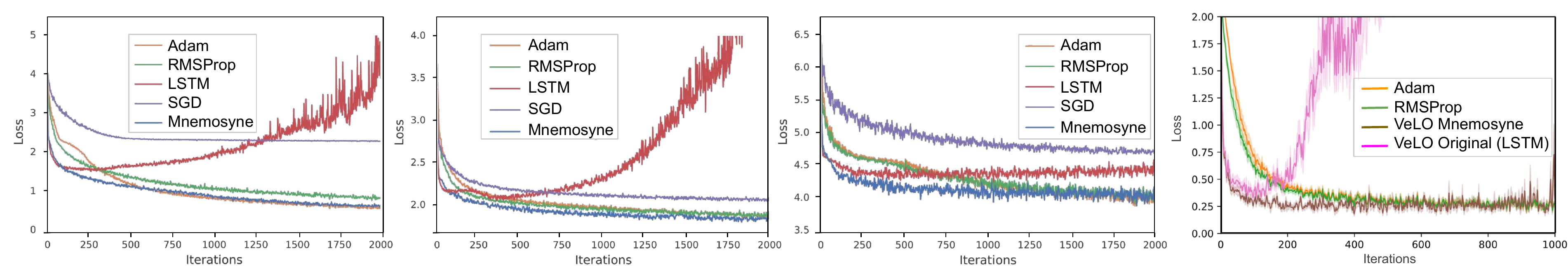}
    \caption{\small{\textbf{First three plots:} Test loss curves for training ViTs with Mnemosyne on: MNIST, CIFAR10 and CIFAR100. Minimal feature engineering is applied. \textbf{Last plot}: Test loss curves for training an MLP on MNIST. Both Mnemosyne and LSTM are used within $\mathrm{VeLO}$ architecture using sophisticated feature engineering.}}
    \label{fig:mnemofig-warmup}
    % \vspace{-3mm}
\end{figure}

Now we proceed with the comparison of Mnemosyne and $\mathrm{Adam}$ with different learning rates on larger Transformers. We emphasize that we do not conduct any hyperparameter tuning for Mnemosyne. 
% \textbf{Masking}: We introduce here masking mechanism that we apply form now on in several places. The idea of masking is to target specific modules of the network while fine-tuning. Furthermore, those different modules can be fine-tuned with different optimizers. Fine-tuning subsets of parameters is a well known method, used on the regular basis, but we also use masking to apply simultaneously different optimizers to train different Transformer layers.

\vspace{-0.5mm}
\subsection{Coordinate-wise Mnemosyne results}
\label{sec:exp_coord}
\vspace{-2mm} 
% This section tests the coordinate-wise variant. We start with the results for ViT fine-tuning and with four different architectures: (a) ViT-B(16): $12$ heads, $12$ layers, projection-dim=$768$, mlp-hidden-dim=$3072$, 
% dropout-rate=$0.0$, layer-norm-eps=$10^{-6}$, patch-shape=$16\times16$, (b) ViT-L(16): as ViT-B(16) but with $16$ heads, $24$ layers, projection-dim=$1024$, mlp-hidden-dim=$4096$, (c) ViT-L(32): as ViT-L(16) but with patch-shape=$32\times32$, (d) ViT-H(32): as ViT-L(32) but with $36$ layers and projection-dim=$1280$. 
% This section tests the coordinate-wise variant. We start with the results for ViT fine-tuning~\cite{dosovitskiy} with Mnemosyne. 
% and with four different architectures: ViT-B(16), ViT-L(16), ViT-L(32) and ViT-H(32).
% Due to space constraints, we have provided the complete set of results with different ViT architectures and datasets in the Appendix (Sec:~\ref{app_sec:exp_coord}), and here we provide some key snapshots of the experiment. 

\textbf{ViT last-layer fine-tuning:} We tested a coordinate-wise Mnemosyne with $2$ temporal encoders on ViT fine-tuning~\cite{dosovitskiy}. Due to memory constraints, we only trained the last layer of ViT by freezing other parameters of the model.
% these two settings are considered: (a) training the last ViT layer, (b) co-training the last ViT layer (in this \textit{hybrid} scenario, Mnemosyne trains the weight-tensor and an \textbf{untuned} $\mathrm{Adam}$ trains the bias-tensor). We tried the hybrid variant to measure the effectiveness of co-training an untuned $\mathrm{Adam}$ with Mnemosyne. 
The results on three different datasets: $\mathrm{imagenet2012}$, $\mathrm{places365}$ and $\mathrm{caltech}$-$\mathrm{birds}$-$\mathrm{2011}$ for ViT-H(32) ($36$ layers, $16$ heads, $32 \times 32$ patch shape) are presented in Fig. \ref{fig:vit-ext-abls}. Additional results, including ablations with varying architecture sizes and other datasets, are shown in Appendix Sec:~\ref{app_sec:exp_coord}.  We conclude that Mnemosyne (without any hyperparameter tuning) matches the top-performing $\mathrm{Adam}$ variant. Note that $\mathrm{Adam}$ is very sensitive to the choice of the learning rate $\mathrm{lr}$.
%Additional results with different ViT architectures and datasets are given in the Appendix %(Sec:~\ref{app_sec:exp_coord})
% (b) works well also in the co-training setting.  \textcolor{red}{This part needs to be refined}

\begin{figure}[h]
    \vspace{-2mm}
    \centering
        \includegraphics[width=0.99\textwidth]{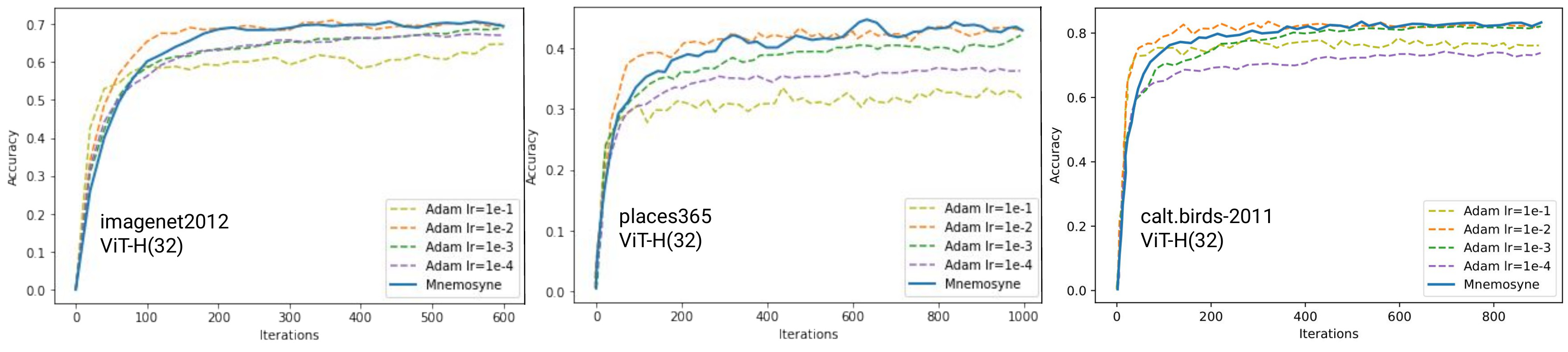}
    \caption{\small{Fine-tuning the last layer of ViT-H(32) on different datasets with coordinate-wise Mnemosyne.}}
    \label{fig:vit-ext-abls}
    \vspace{-4mm}
\end{figure}

\begin{wrapfigure}{r}{0.35\textwidth}\vspace{-2mm}
    \centering
    \includegraphics[width=0.35\textwidth]{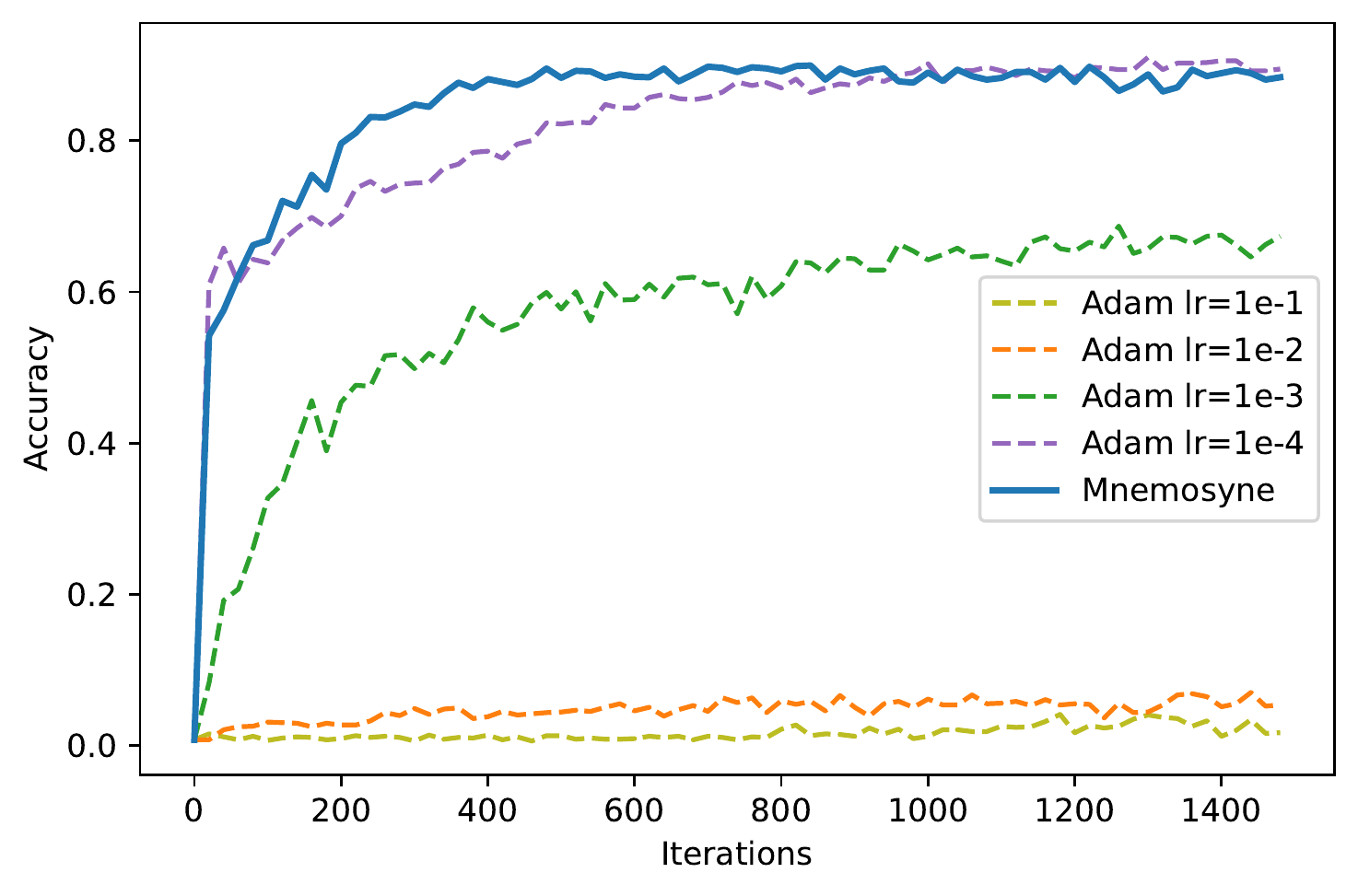}\vspace{-2mm}
    \caption{\small{Fine-tuning ViT-B(16).}}
    \label{fig:litemnemo}\vspace{-2mm}
    % \vspace{-5mm}
\end{wrapfigure}
\textbf{ViT multi-layer fine-tuning:} The \textit{light Mnemosyne} variant with a single temporal encoder can scale up to fine-tune $2$ Transformer layers along with the last layer. We show the result for fine-tuning ViT-B(16) on $\mathrm{Cifar100}$ in Fig. \ref{fig:litemnemo}. The non-Mnemosyne layers are trained using an untuned $\mathrm{Adam}$ and the Adam variants for comparison fine-tune the full model. We observe that full-model finetuning is a hard task where all but one $\mathrm{Adam}$ variant fail to reach the optimal accuracy. Here, even by training a part of the network with Mnemosyne, we achieve the best accuracy. Also note that the best $\mathrm{Adam}$ learning rate in this experiment ($1e^{-4}$) was performing poorly for the previous one. We see this in subsequent results as well.

\begin{wrapfigure}{r}{0.35\textwidth}\vspace{-7mm}
    \centering
    \includegraphics[width=0.35\textwidth]{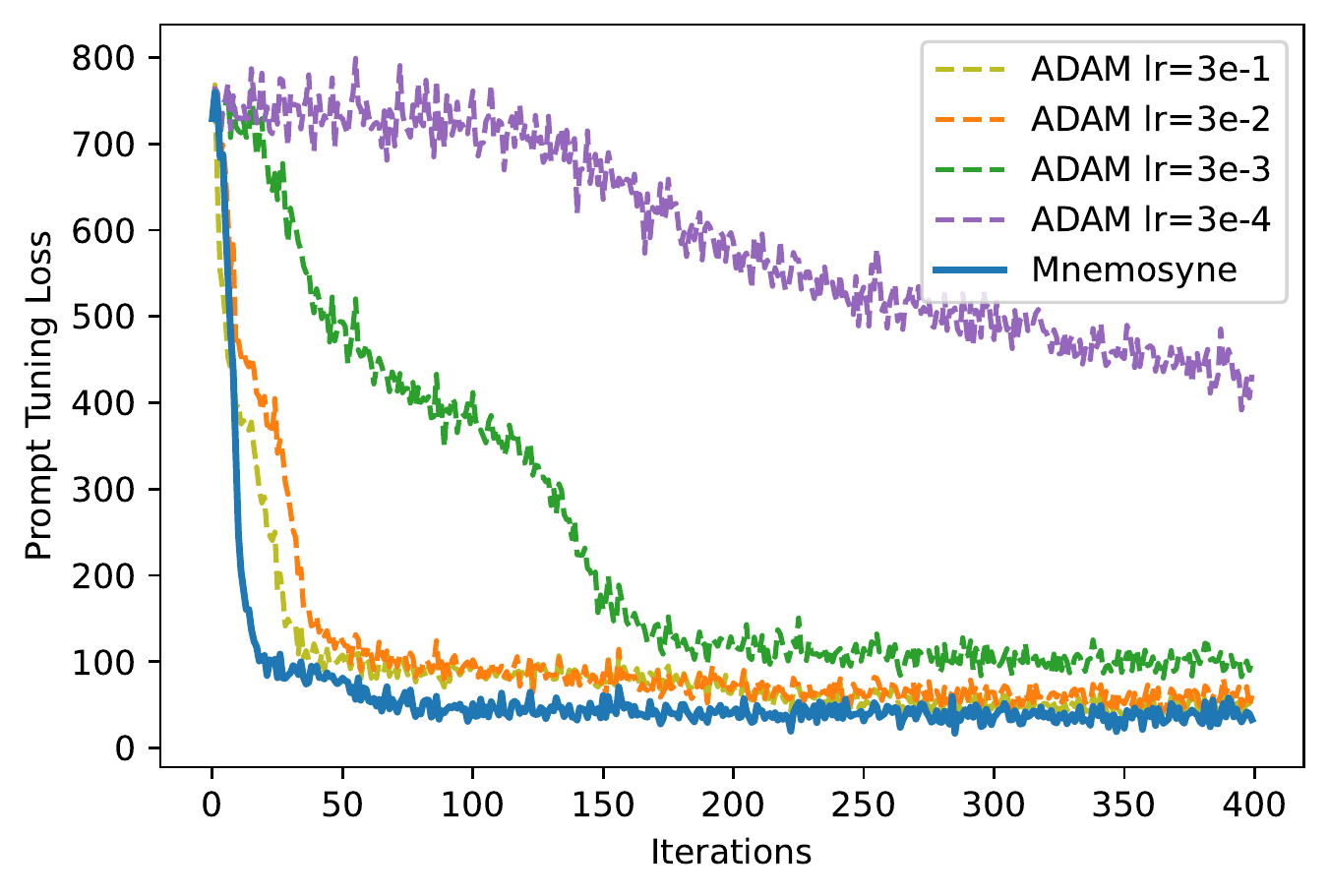}\vspace{-2mm}
    \caption{\small{Soft prompt-tuning T5XXL.}}
    \label{fig:bert-prompt}
    \vspace{-5mm}
\end{wrapfigure}
\textbf{Soft prompt-tuning T5XXL:} We tested coordinate-wise Mnemosyne for soft prompt-tuning 11B+ T5XXL Transformers~\cite{raffel2020exploring} on the SuperGLUE task~\citep{soft-prompt-tuning}. This method was introduced as a scalable alternative to fine-tuning pre-trained models for several downstream tasks, with learnable prompts injected into Transformer layers and modulating its behaviour. The trainable soft-prompt contains $12288$ parameters. Mnemosyne outperforms all $\mathrm{Adam}$ variants (see: Fig. \ref{fig:bert-prompt}).

\subsection{Tensor-wise Mnemosyne results}
\label{sec:exp_tensor}
In this section, we evaluate the memory-efficient tensor-wise Mnemosyne. 

\textbf{BERT NLP Transformer pre-training:} We showcase the ability of tensor-wise Mnemosyne to pre-train a BERT-base text Transformer~\cite{devlin} on the standard masked language modeling (MLM) task. 

\begin{wrapfigure}{r}{0.35\textwidth}\vspace{-5mm}
    \centering
    \includegraphics[scale=0.35]{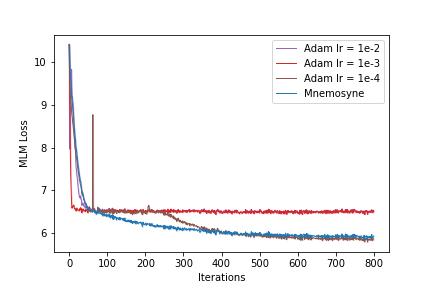}
    \caption{\small{MLM task with BERT.}}
    \label{fig:bert_mlm}\vspace{-4mm}
\end{wrapfigure}

Coordinate-wise Mnemosyne was not applicable here due to the memory footprint and thus it became a practical test for the memory efficient tensor-wise variant. 
With that variant, we were able to train \textbf{86M parameters} of Bert-base model thereby showcasing the scalability of tensor-wise Mnemosyne. We compare Mnemosyne with different variants of $\mathrm{Adam}$ in Fig ~\ref{fig:bert_mlm}. We see that Mnemosyne matches the performance of the best $\mathrm{Adam}$ variant \textit{even though it was never exposed to the MLM task during meta-training}. Several $\mathrm{Adam}$ variants get stuck in local optima, worse than that found using Mnemosyne.

\textbf{ViT multi-layer fine-tuning:} Now we show the performance of tensor-wise Mnemosyne on fine-tuning ViTs. Although tensor-wise Mnemosyne can scale up to the full ViT model, here we freeze large tensors in the Transformer layers and fine-tune the rest. Training large tensors with tensor-wise Mnemosyne requires data and compute intensive large scale meta-training (to ensure that spatial encoders learn how to encode long sequences well). This will be the focus of the future work. For this experiment, we consider the ViT-H(32) model and three datasets: $\mathrm{Cifar100}$, $\mathrm{places365}$, $\mathrm{imagenet}$ in Fig. \ref{fig:vit-h-tensorwise}. Mnemosyne shows equivalent or better performance as compared to the optimal $\mathrm{Adam}$. 

\begin{figure}[h]
    \vspace{-2mm}
    \centering
    \includegraphics[width=0.99\textwidth]{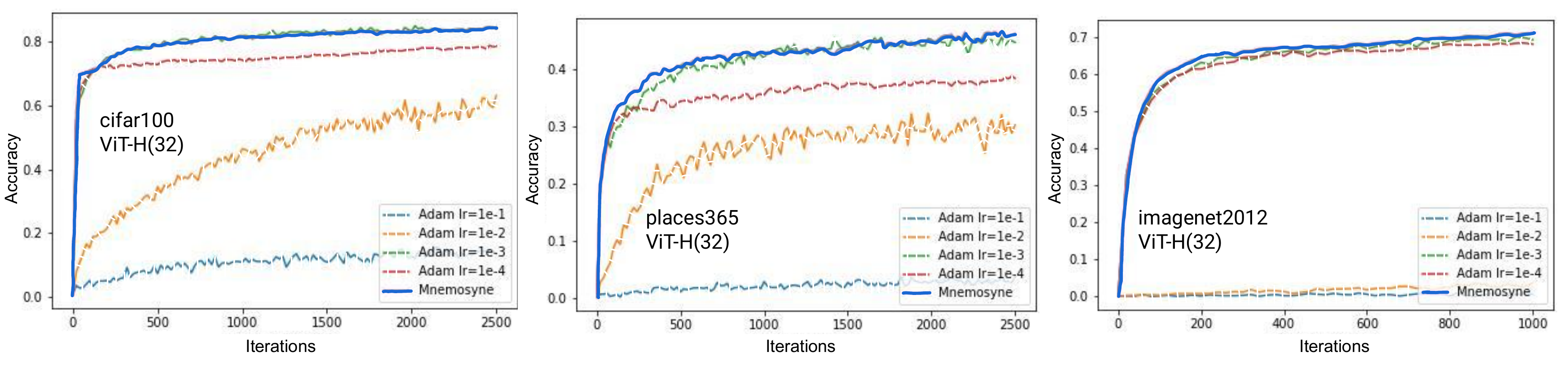}
    \caption{\small{Fine-tuning multi-layer ViT-H(32) on different datasets with tensor-wise Mnemosyne.}}
    \label{fig:vit-h-tensorwise}
    \vspace{-3mm}
\end{figure}
\begin{figure}[h]
    \centering
    \includegraphics[width=0.99\textwidth]{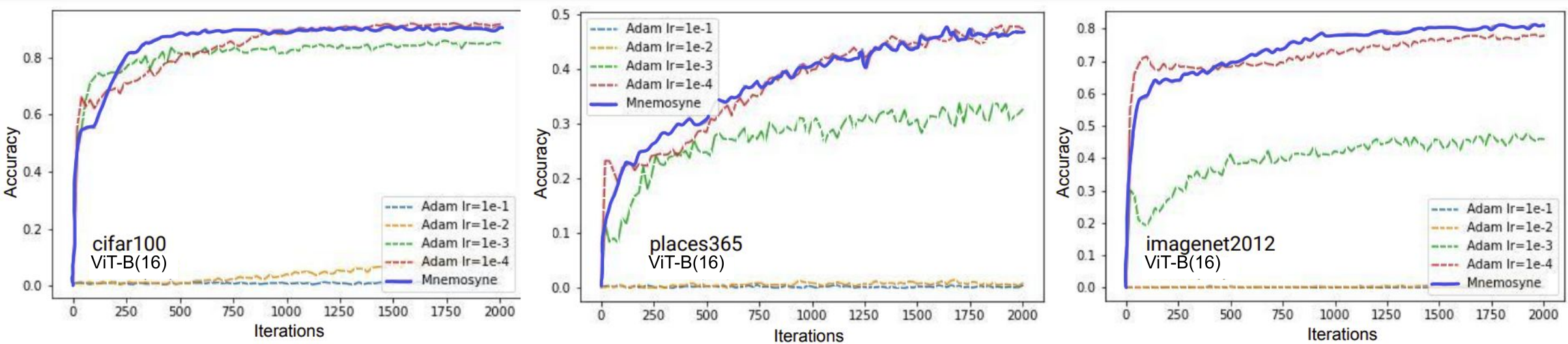}
    \caption{\small{Fine-tuning \textbf{top-8 layers} of ViT-B(16) on different datasets with Super-Mnemosyne.}}
    \label{fig:super-mnemo}
    \vspace{-3mm}
\end{figure}
%\textcolor{red}{We chose BERT-base model consisting of:...}. We run two experiments. In the first one, \textcolor{red}{Mnemosyne was used to train ...} and sub-optimal $\mathrm{Adam}$ (of learning rate $10^{-3}$) \textcolor{red}{to train ...}. In the second one, \textcolor{red}{Mnemosyne was used to train ...} and sub-optimal $\mathrm{Adam}$ (of learning rate $10^{-3}$) \textcolor{red}{to train ...}. The results are presented in the first two plots of Fig. \ref{fig:bert-prompt}. In the first setting, Mnemosyne converges faster than $\mathrm{Adam}$ variants (including the optimal). In the second setting, Mnemosyne matches optimal $\mathrm{Adam}$. However its corresponding BERT pre-training process enters the phase transition leading to better local minimum earlier than the one for the optimal $\mathrm{Adam}$. 

\subsection{Super-Mnemosyne: combining coordinate- and tensor-wise strategies}
\label{sec:exp_super}
In this last set of ViT-experiments, we combine tensor-wise and coordinate-wise light Mnemosyne. Since applying the former on large tensors requires more intense meta-training which is out of the scope of this paper, we decided to optimize the largest tensors with the coordinate-wise and others with the tensor-wise Mnemosyne (to keep meta-training simple).
% This combination turned out to minimize total memory usage. 
As we see in Fig. \ref{fig:super-mnemo}, Mnemosyne outperforms optimal $\mathrm{Adam}$ variants.

% \textcolor{red}{Do we need final recomendation / conclusion from the experiments?}
\vspace{-3mm}
\section{Broader impacts \& limitations}
\vspace{-3mm}
We believe that Mnemosyne opens a research on attention-based optimizers for general-purpose optimization. Our system should be used responsibly due to the potential for misuse, significant societal impact, and carbon footprint of Transformers \cite{stochpar,gptleak, weidinger2021ethical}. In the future we want to analyze in more depth the impact on more complex meta-training strategies on the quality of learned optimizers.
\vspace{-3mm}
\section{Conclusion}
\vspace{-3mm}
We proposed a new class of learnable optimizers applying efficient spatio-temporal attention, called Mnemosyne. We show that they outperform their LSTM-based counterparts and can be successfully used to fine/soft prompt-tune and pre-train large Transformer models, matching optimal hard-coded variants without any hyper-parameter tuning, often producing top performing models. 

%\newpage
%\begin{figure*}
%    \centering
%    \includegraphics[width=0.325\textwidth]{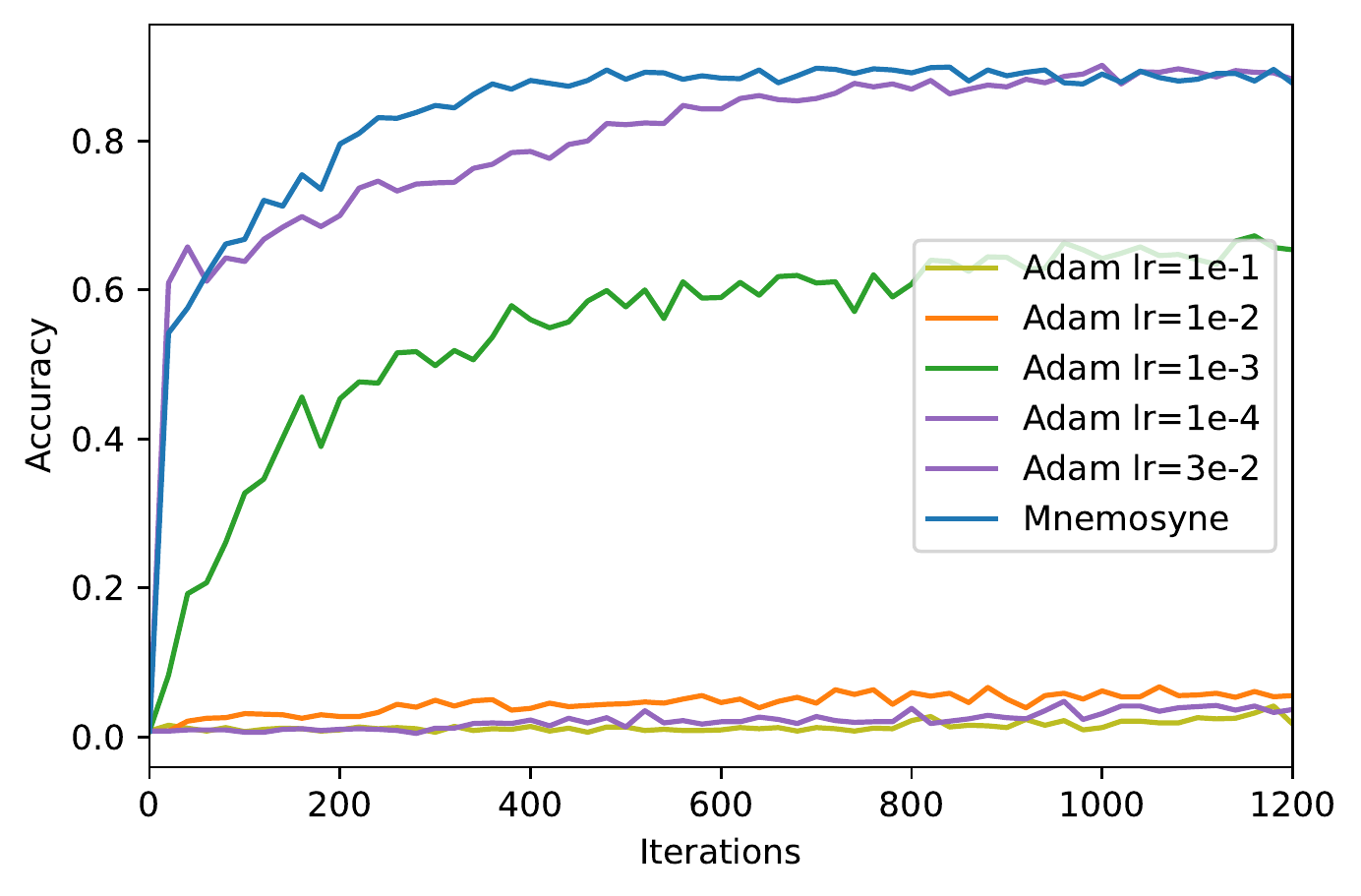}
%    \caption{Performance of Mnemosyne (coordinate-wise) on ViT-B on $\mathrm{CIFAR 100}$.}
%    \label{fig:speech}
%    \vspace{-3mm}
%\end{figure*}

\bibliography{main}
\bibliographystyle{plain}

\newpage

\newpage
\appendix
\onecolumn
%\textbf{Note}: The order of the authors of the paper has been updated to accurately reflect %the contribution to the current content of the paper. 
\section{Proofs}

\subsection{Kernels \& their linearizations for temporal encoders in Mnemosyne}
\label{sec:rfs}
We tested different transformations $\phi$ and discovered that those leading to most accurate approximation of the softmax-kernel lead to most effective memory mechanisms for Mnemosyne's temporal encoders (see: Sec. \ref{sec:temporal_encs}). Our starting variant is the so-called \textit{FAVOR+} mechansim from \citep{choromanski}, given as follows for $\Gamma(\mathbf{z},r) \overset{\mathrm{def}}{=} \frac{1}{\sqrt{r}}\exp(-\frac{\|\mathbf{z}\|^{2}}{2})$ and $\omega_{1},...,\omega_{r} \sim \mathcal{N}(0,\mathbf{I}_{N})$:
\begin{equation}
\phi_{F+}(\mathbf{z}) = \Gamma(\mathbf{z},r)\left(\exp(\omega_{1}^{\top}\mathbf{z}),...,\exp(\omega_{r}^{\top}\mathbf{z})\right)^{\top}
\end{equation}
Random vectors $\omega_{1},...,\omega_{r}$ form a block-orthogonal ensemble (see: \cite{choromanski}). We applied also its improvement relying on the so-called \textit{hyperbolic cosine random features}, where $\prod$ is the concatenation operator:
\begin{equation}
\phi_{HF+}(\mathbf{z}) = \Gamma(\mathbf{z},r)\prod_{i=1}^{\frac{r}{2}}(\exp(\omega_{i}^{\top}\mathbf{z}),\exp(-\omega_{i}^{\top}\mathbf{z}))^{\top}
\end{equation}
Both randomized transformations provide \textbf{unbiased} estimation of the softmax-kernel, yet the latter one (that can be cast as modified $\phi_{F+}$ via the antithetic Monte Carlo trick) has provably lower approximation variance.

\subsubsection{The curious case of linearization with bounded features}
\label{sec:curious}
The last variant for the efficient estimation of the softmax-kernel we applied, is a very recent mechanism \textit{FAVOR++} from \cite{crts}, given as:
{\footnotesize
\begin{equation*}
\phi_{F++}(\mathbf{z}) = \frac{D}{\sqrt{r}}\prod_{i=1}^{r} \exp(-\widehat{A}\|\omega_{i}\|_{2}^{2}+B \omega_{i}^{\top}\mathbf{z}+C\|\mathbf{z}\|^{2})^{\top},
\end{equation*}
}
where we have: $\widehat{A}=-A$, $B=\sqrt{1+4\widehat{A}}$, $C=-\frac{1}{2}$, $D=(1+4\widehat{A})^{\frac{N}{4}}$,
$A=1-\frac{1}{\rho}$ and $\rho \in (0,1)$ is a free parameter. As opposed to the previous variants, mechanism $\phi_{F++}(\mathbf{z})$ provides an estimation via \textbf{bounded} random variables (since $\widehat{A}>0$), leading to stronger concentration results (beyond second moment) and still unbiased approximation.

The optimal choice of $\rho$ depends on the kernel inputs. The formula for $\rho$ optimizing the variance of the kernel matrix estimation $\mathcal{K}=[\mathrm{K}(\mathbf{q}^{i},\mathbf{k}^{j})]_{i,j=1,...,M}$ induced by the softmax-kernel $\mathrm{K}$ (in the bi-directional case) is not tractable. However choosing $\rho$ by optimizing certain derivative of the variance-objective was showed to work well in several applications \cite{crts}:
\begin{equation}
\rho^{*}=\frac{\sqrt{(2\gamma+N)^{2}+8N \gamma}-2\gamma-N}{4\gamma}
\end{equation}
for $\gamma=\frac{1}{M^{2}}\sum_{i=1}^{M}\sum_{j=1}^{M}\|\mathbf{q}^{i}+\mathbf{k}^{j}\|^{2}$. Since $\gamma$ can be rewritten as: $\gamma = \frac{1}{M^{2}}(\sum_{i=1}^{M}\|\mathbf{q}^{i}\|_{2}^{2}+\sum_{j=1}^{M}\|\mathbf{k}^{j}\|_{2}^{2} + 2\mathbf{q}^{\top}\mathbf{k})$ for $\mathbf{q} = \sum_{i=1}^{M}\mathbf{q}^{i}$ and $\mathbf{k} = \sum_{j=1}^{M}\mathbf{k}^{j}$, computing $\rho^{*}$ in the bi-directional setting can be clearly done in time linear in $M$ as a \textbf{one-time} procedure. Then the computation of $\mathbf{h}_{\mathrm{Mne}}(M)$ follows. Compute-time per memory-vector remains $O_{M}(1)$.  

However Mnemosyne's temporal encoder applied uni-directional attention.
In the uni-directional case, we have: $\gamma_{t}=\frac{1}{t}\sum_{j=1}^{t}\|\mathbf{q}^{t}+\mathbf{k}^{j}\|^{2}=\frac{1}{t}(\|\mathbf{q}^{t}\|_{2}^{2}+\sum_{j=1}^{t}\|\mathbf{k}^{j}\|_{2}^{2} + 2(\mathbf{q}^{t})^{\top}\mathbf{k}(t))$, where $\mathbf{k}(t) = \sum_{j=1}^{t} \mathbf{k}^{j}$ for $t=1,...,M$. Instead of one $\gamma$, we now have $M$ values $\gamma_{t}$ since not all memories are known at once. We can still achieve $O_{1}(M)$ compute-time per $\gamma_{t}$, repeating the trick from the bi-directional case, but that would need to be followed by the re-computation of $\phi(\mathbf{k}^{\mu})$ (with new $\rho$-parameter) for $\mu=1,...,t$ which of course is not possible since vectors $\{\mathbf{k}\}_{\mu=1}^{t}$ are not explicitly stored (and for a good reason - computational benefits), see: Eq. \ref{eq:mnemosyne-memory}.

\paragraph{Thickening Mnemosyne's memory:} To obtain efficient uni-directional Mnemosyne's memory cell also for the $\phi_{F++}$-mechanism, we propose to "thicken" in that setting the hidden state from Eq. \ref{eq:mnemosyne-memory}, replacing $\mathbf{h}_{\mathrm{Mne}}(t)=(\mathbf{N}_{t},\Psi_{t})$ with 
$\mathbf{H}_{\mathrm{Mne}}(t)=(\{\mathbf{N}_{t}^{\rho}\}_{\rho \in \Omega}, \{\Psi_{t}^{\rho}\}_{\rho \in \Omega},\Sigma_{t}, \Lambda_{t})$, where we have: $\Sigma_{t}=\sum_{j=1}^{t} \mathbf{k}^{j}$, $\Lambda_{t}=\sum_{j=1}^{t}\|\mathbf{k}^{j}\|_{2}^{2}$ and furthermore: $\mathbf{N}^{\rho}_{t}$, $\Psi^{\rho}_{t}$ correspond to versions of $\mathbf{N}_{t}$ and $\Psi_{t}$ respectively, using parameter $\rho$ to define mapping $\phi$. The set $\Omega$ is obtained by discretizing interval $(0,1)$ into a fixed 
number of chunks $c$ (and effectively quantizes $\rho \in (0,1)$). The strategy is now clear: when the new pattern comes, we first update the entire thickened state, and then compute $\rho^{*}$. We finalize by finding $\rho \in \Omega$ closest to $\rho^{*}$ to transform an input and using for that the "slice" of the hidden state corresponding to $\rho$. We see that all these operations can be made efficiently with only $c$-multiplicative term (\textbf{independent} from the number of patterns $M$) in space and time complexity.

FAVOR++ mechanism, as FAVOR+, can also be adapted to its hyperbolic cosine variant. In practice FAVOR+ mechanism worked similarly to FAVOR++, yet the proper adaptation of the latter one was important, since (see: Sec. \ref{sec:thm}), this variant provides strongest theoretical guarantees for the capacity of the entire compact associative memory model.

\subsection{The proof of the extended version of Theorem \ref{thm:compact-associative}}
\label{sec:proof-one}

We start by providing an extended version of Theorem \ref{thm:compact-associative}, enriched with the exact formula of the variance of $\Delta(E_{\mathrm{rand}})$. We prove it below. We borrow the notation from Sec. \ref{sec:rfs}.

\begin{theorem}[storage of compact associative memories]
\label{thm:compact-associative-extended}
Denote by $\xi^{1},...,\xi^{M} \in \{-1,+1\}^{N}$ the memory-vectors. Assume that the Hamming distance between any two memory-vectors is at least $\tau N$ for some $\tau>0$. Take some $0 < \rho < \frac{\tau}{2}$. Then the following is true for any memory-vector $\xi^{l}$ for $l=1,...,\mu$ and any input $\widehat{\xi}^{l} \in \mathcal{B}(\xi^{l},\rho N)$ as long as $M \leq \exp(2N(\tau-2\rho))\frac{1-e^{-2}}{2e^{2}}$: the expected change of the energy of the compact associative memory system $\Delta(E_{\mathrm{rand}})$ associated with flipping the value of the dimension of $\widehat{\xi}^{l}$ is positive if that operation increases the distance from its close neighbor $\xi^{l}$ and is negative otherwise.
Furthermore, the variance of $\Delta(E_{\mathrm{rand}})$ is of the form: 
{\small
\begin{equation}\vspace{-3mm}
\mathrm{Var}(\Delta(E_{\mathrm{rand}})) = \frac{1}{r}(V_{1}+V_{2}-2V_{3}-V_{4}-V_{5}+2V_{6})    
\end{equation}
}
where: 
% {\small
% \begin{align}
% \begin{split}
% V_{1} = \sum_{\mu_{1},\mu_{2} \in \{1,...,M\}}\Psi(\xi^{\mu_{1}}+\xi^{\mu_{2}}+2\widehat{\xi}^{l}) \\
% V_{2} = \sum_{\mu_{1},\mu_{2} \in \{1,...,M\}}\Psi(\xi^{\mu_{1}}+\xi^{\mu_{2}}+2\tilde{\xi}^{l}) \\
% V_{3} = \sum_{\mu_{1},\mu_{2} \in \{1,...,M\}}\Psi(\xi^{\mu_{1}}+\xi^{\mu_{2}}+\widehat{\xi}^{l}+\tilde{\xi}^{l}) \\
% V_{4} = \sum_{\mu_{1},\mu_{2} \in \{1,...,M\}}\exp((\xi^{\mu_{1}})^{\top}\widehat{\xi}^{l})\exp((\xi^{\mu_{2}})^{\top}\widehat{\xi}^{l})\\
% V_{5} = \sum_{\mu_{1},\mu_{2} \in \{1,...,M\}}
% \exp((\xi^{\mu_{1}})^{\top}\tilde{\xi}^{l})
% \exp((\xi^{\mu_{2}})^{\top}\tilde{\xi}^{l})
% \end{split} \nonumber
% \end{align}
% }
{\small
\begin{align}\vspace{-3mm}
\begin{split}
V_{1} = \sum_{\mu_{1},\mu_{2} \in \{1,...,M\}}\Psi(\xi^{\mu_{1}}+\xi^{\mu_{2}}+2\widehat{\xi}^{l}), \textrm{ }\textrm{ }\textrm{ }
V_{2} = \sum_{\mu_{1},\mu_{2} \in \{1,...,M\}}\Psi(\xi^{\mu_{1}}+\xi^{\mu_{2}}+2\tilde{\xi}^{l}) \\
V_{3} = \sum_{\mu_{1},\mu_{2} \in \{1,...,M\}}\Psi(\xi^{\mu_{1}}+\xi^{\mu_{2}}+\widehat{\xi}^{l}+\tilde{\xi}^{l}) \textrm{ }\textrm{ }\textrm{ }
V_{4} = \sum_{\mu_{1},\mu_{2} \in \{1,...,M\}}\exp((\xi^{\mu_{1}})^{\top}\widehat{\xi}^{l})\exp((\xi^{\mu_{2}})^{\top}\widehat{\xi}^{l}) \\
V_{5} = \sum_{\mu_{1},\mu_{2} \in \{1,...,M\}}
\exp((\xi^{\mu_{1}})^{\top}\tilde{\xi}^{l})
\exp((\xi^{\mu_{2}})^{\top}\tilde{\xi}^{l}) \textrm{ }\textrm{ }\textrm{ }
V_{6} = \sum_{\mu_{1},\mu_{2} \in \{1,...,M\}}
\exp((\xi^{\mu_{1}})^{\top}\widehat{\xi}^{l})
\exp((\xi^{\mu_{2}})^{\top}\tilde{\xi}^{l})
\end{split}
\end{align}
}
for $\tilde{\xi}^{l}$ denoting $\widehat{\xi}^{l}$ with one of its dimensions flipped and:
%{\scriptsize
\begin{equation}
\Psi(\mathbf{x}) \overset{\mathrm{def}}{=} D^{4}\exp(-2N)(1+8\widehat{A})^{-\frac{N}{2}}
\exp\left(\frac{B^{2}}{2(1-8\widehat{A})}\|\mathbf{x}\|^{2}\right)
\end{equation}
%}
\end{theorem}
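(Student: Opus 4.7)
The plan is to handle the two claims separately, both building on two features of $\phi_{F++}$: (i) it is an unbiased estimator of the softmax kernel, so $\mathbb{E}\bigl[\phi_{F++}(\mathbf{x})^{\top}\phi_{F++}(\mathbf{y})\bigr] = \exp(\mathbf{x}^{\top}\mathbf{y})$; and (ii) each of its $r$ coordinates is an independent function of a separate Gaussian sample $\omega_{i}\sim\mathcal{N}(0,\mathbf{I}_{N})$, so every quantity built from $E_{\mathrm{rand}}$ decomposes into a mean of $r$ iid summands.

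For the \textbf{expected-sign} claim, I first apply (i) to obtain $\mathbb{E}[\Delta(E_{\mathrm{rand}})] = -\sum_{\mu=1}^{M}\bigl(\exp((\tilde{\xi}^{l})^{\top}\xi^{\mu}) - \exp((\widehat{\xi}^{l})^{\top}\xi^{\mu})\bigr)$, thereby reducing the problem to the analogous statement for the regular exponential energy $E_{\mathrm{reg}}$. I then isolate the $\mu = l$ term and bound the remaining ones via the separation hypothesis: since $d_{H}(\xi^{\mu},\xi^{l}) \geq \tau N$ and $d_{H}(\widehat{\xi}^{l},\xi^{l}) \leq \rho N$, the triangle inequality gives $(\widehat{\xi}^{l})^{\top}\xi^{\mu} \leq N(1 - 2(\tau-\rho))$ for $\mu \neq l$, while $(\widehat{\xi}^{l})^{\top}\xi^{l} \geq N(1-2\rho)$. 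A coordinate flip changes every inner product by $\pm 2$, so the $\mu = l$ contribution to $\mathbb{E}[\Delta(E_{\mathrm{rand}})]$ has magnitude at least $(1-e^{-2})\exp(N(1-2\rho))$ with the sign matching the direction of the flip relative to $\xi^{l}$, while each of the remaining $M-1$ summands has magnitude at most $(1-e^{-2})e^{2}\exp(N(1-2(\tau-\rho)))$. Requiring strict dominance of the $\mu = l$ term, uniformly in both flip directions, yields exactly the claimed bound $M \leq \exp(2N(\tau-2\rho))(1-e^{-2})/(2e^{2})$.

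For the \textbf{variance} claim, write the $i$-th random-feature coordinate (stripping the $1/\sqrt{r}$ factor) as $g_{i}(\mathbf{x}) \coloneqq D\exp(-\widehat{A}\|\omega_{i}\|^{2} + B\omega_{i}^{\top}\mathbf{x} + C\|\mathbf{x}\|^{2})$, so that $E_{\mathrm{rand}}(\mathbf{x}) = -\tfrac{1}{r}\sum_{i=1}^{r}a_{i}(\mathbf{x})$ with $a_{i}(\mathbf{x}) \coloneqq \sum_{\mu=1}^{M}g_{i}(\mathbf{x})g_{i}(\xi^{\mu})$. By (ii) the $a_{i}$ are iid, so $\mathrm{Var}(\Delta(E_{\mathrm{rand}})) = \tfrac{1}{r}\mathrm{Var}(A-B)$ where $A \coloneqq a_{1}(\widehat{\xi}^{l})$ and $B \coloneqq a_{1}(\tilde{\xi}^{l})$. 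Expanding $\mathrm{Var}(A-B) = \mathbb{E}[A^{2}] + \mathbb{E}[B^{2}] - 2\mathbb{E}[AB] - (\mathbb{E}[A])^{2} - (\mathbb{E}[B])^{2} + 2\mathbb{E}[A]\mathbb{E}[B]$ produces six double sums over $\mu_{1},\mu_{2}$. Each of the first three is evaluated through the Gaussian moment-generating-function identity $\mathbb{E}_{\omega}[\exp(-\alpha\|\omega\|^{2}+\beta^{\top}\omega)] = (1+2\alpha)^{-N/2}\exp(\|\beta\|^{2}/(2(1+2\alpha)))$ with $\alpha = 4\widehat{A}$ and $\beta = B$ times the relevant combination of $\widehat{\xi}^{l}$, $\tilde{\xi}^{l}$, $\xi^{\mu_{1}}$, $\xi^{\mu_{2}}$; this produces the function $\Psi$ and identifies the three cross-terms with $V_{1},V_{2},V_{3}$. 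The three squared-expectation sums collapse directly via (i) to $V_{4},V_{5},V_{6}$, completing the stated decomposition.

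The main obstacle I anticipate is in the first part: carefully propagating the $\pm 2$ offsets from a single bit flip through the triangle inequality, so that the off-diagonal contributions are controlled uniformly in both flip directions and tighten down to the precise constants $(1-e^{-2})$ and $1/(2e^{2})$ in the theorem without unnecessary slackness. The variance identities, by contrast, reduce to bookkeeping; the only caveat worth verifying is that the factor $1 - 8\widehat{A}$ appearing in the stated formula for $\Psi$ should in fact read $1 + 8\widehat{A}$, since the Gaussian MGF produces a normalization of the form $(1+2\alpha)$ rather than $(1-2\alpha)$, and only this sign choice makes $\Psi$ positive and convergent for $\widehat{A} > 0$.
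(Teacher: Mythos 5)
Your proposal follows essentially the same route as the paper's proof: for the expected-sign claim, the paper likewise uses unbiasedness of $\phi_{F++}$ to reduce $\mathbb{E}[\Delta(E_{\mathrm{rand}})]$ to the corresponding expression with exact softmax-kernel values, splits it into a ``signal'' term ($\mu = l$) and a ``noise'' sum ($\mu \neq l$), bounds the signal below by $(1-e^{-2})\exp(N(1-2\rho))$ and the noise above by $2Me^{2}\exp(N(1-2(\tau-\rho)))$ via the Hamming-separation/triangle-inequality argument (plus the analogous case for the reverse flip, which yields a strictly weaker bound and is subsumed); and for the variance, the paper also exploits independence of the $r$ random-feature coordinates, writes $\mathrm{Var}(\Delta(E_{\mathrm{rand}})) = \frac{1}{r}\mathrm{Var}\bigl(\sum_{\mu}(W_{1}^{\mu}-Z_{1}^{\mu})\bigr)$, expands the square into the six double sums, and evaluates the quadratic cross-moments via the Gaussian MGF identity and the squared-means via unbiasedness — exactly the $V_{1},\dots,V_{6}$ decomposition you propose. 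Your caveat about $\Psi$ is also correct: applying $\mathbb{E}[\exp(A\|\omega\|^{2}+B\omega^{\top}\mathbf{x})] = (1-2A)^{-N/2}\exp\bigl(\frac{B^{2}\|\mathbf{x}\|^{2}}{2(1-2A)}\bigr)$ with $A = -4\widehat{A}$ gives $1-2A = 1+8\widehat{A}$ in both the normalization \emph{and} the exponent, so the factor $(1-8\widehat{A})$ in the stated $\Psi$ is a typo for $(1+8\widehat{A})$, which is what the paper's own intermediate computation implies; you identified this correctly.
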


\begin{proof}
Take a memory $\xi^{l} \in \{-1,+1\}^{N}$ and an input $\widehat{\xi}^{l} \in \mathcal{B}(\xi^{l},\rho N)$. Denote by $\mathrm{neg}(\widehat{\xi}^{l}, i)$ a vector obtained from $\widehat{\xi}^{l}$ by replacing $\widehat{\xi}^{l}(i)$ with $-\widehat{\xi}^{l}(i)$. Let us study the change of the energy of the system as we flip the value of the ith dimension of the input $\widehat{\xi}^{l}$ since the sign of this change solely determines the update that will be made.  We have the following:
\begin{align}
\begin{split}
\Delta(E_{\mathrm{rand}}) = E(\mathrm{neg}(\widehat{\xi}^{l}, i);\xi^{1},...,\xi^{M}) - 
E(\widehat{\xi}^{i};\xi^{1},...,\xi^{M}) = 
E_{\mathrm{signal}} + E_{\mathrm{noise}},
\end{split}
\end{align}
where:
\begin{equation}
E_{\mathrm{signal}} = \frac{1}{r} \sum_{k=1}^{r} (W_{k}^{l}-Z_{k}^{l}),    
\end{equation}
\begin{equation}
E_{\mathrm{noise}} = \frac{1}{r} \sum_{k=1}^{r}\sum_{\mu \in \{1,...,M\} \backslash \{l\}} (W^{\mu}_{k}-Z^{\mu}_{k}),    
\end{equation}
and furthermore: $W^{i}_{k} = a^{i}_{k}b_{k}$, $Z^{i}_{k} = a^{i}_{k}c_{k}$ for:
\begin{align}
\begin{split}
a^{i}_{k} = D\exp(-\frac{N}{2})\exp(B \omega_{k}^{\top}\xi^{i} - \widehat{A}\|\omega_{k}\|^{2}_{2}), \\
b_{k} = D\exp(-\frac{N}{2})\exp(B \omega_{k}^{\top}\widehat{\xi}^{l} - \widehat{A}\|\omega_{k}\|^{2}_{2}), \\
c_{k} = D\exp(-\frac{N}{2})\exp(B \omega_{k}^{\top}\mathrm{neg}(\widehat{\xi}^{l}, i) - \widehat{A}\|\omega_{k}\|^{2}_{2}).
\end{split}    
\end{align}

If $\omega_{1},...,\omega_{r} \sim \mathcal{N}(0,\mathbf{I}_{N})$ then, from the fact that $E_{\mathrm{rand}}$ is the unbiased estimation of $E_{\mathrm{reg}}$, we get:
\begin{align}
\begin{split}
\label{eq:exp}
\mathbb{E}[X_{k}] = \exp((\xi^{l})^{\top}\widehat{\xi}^{l}), \\
\mathbb{E}[Y_{k}] = \exp((\xi^{l})^{\top}\mathrm{neg}(\widehat{\xi}^{l},i)), \\
\mathbb{E}[W^{\mu}_{k}] = \exp((\xi^{\mu})^{\top}\widehat{\xi}^{l}), \\
\mathbb{E}[Z^{\mu}_{k}] = \exp((\xi^{\mu})^{\top}\mathrm{neg}(\widehat{\xi}^{l}, i)), \\
\end{split}
\end{align}

This is a direct consequence of the OPRF-mechanism introduced in \cite{crts}. Variables: $X_{k}$, $Y_{k}$, $W^{\mu}_{k}$ and $Z^{\mu}_{k}$ for $\mu=1,...,M$ are simply unbiased estimators of the softmax-kernel values obtained via applying OPRF-mechanism.
Let us now compute the expected change of the energy of the system:

\begin{equation}
\mathbb{E}[\Delta(E_{\mathrm{rand}})] = \mathbb{E}[{E_\mathrm{signal}}] + \mathbb{E}[E_{\mathrm{noise}}],
\end{equation}
where:
\begin{equation}
\label{eq:exp_signal}
\mathbb{E}[E_{\mathrm{signal}}] = \frac{1}{r}\sum_{k=1}^{r}(\mathbb{E}[X_{k}]-\mathbb{E}[Y_{k}]) = \frac{1}{r}\sum_{k=1}^{r} \left(\exp((\xi^{l})^{\top}\widehat{\xi}^{l}) - \exp((\xi^{l})^{\top}\mathrm{neg}(\widehat{\xi}^{l},i))\right)
\end{equation}
and
\begin{align}
\begin{split}
\mathbb{E}[E_{\mathrm{noise}}] = \frac{1}{r} \sum_{k=1}^{r}\sum_{\mu \in \{1,...,M\} \backslash \{l\}} (\mathbb{E}[W^{\mu}_{k}]-\mathbb{E}[Z^{\mu}_{k}]) = \\ \frac{1}{r} \sum_{k=1}^{r}\sum_{\mu \in \{1,...,M\} \backslash \{l\}}\left(\exp((\xi^{\mu})^{\top}\widehat{\xi}^{l}) - \exp((\xi^{\mu})^{\top}\mathrm{neg}(\widehat{\xi}^{l}, i))\right)
\end{split}
\end{align}

We will first upper bound $|\mathbb{E}[E_{\mathrm{noise}}]|$.
We have:
\begin{align}
\begin{split}
|\mathbb{E}[E_{\mathrm{noise}}]| \leq \frac{1}{r} \sum_{k=1}^{r}\sum_{\mu \in \{1,...,M\} \backslash \{l\}}\left(\exp((\xi^{\mu})^{\top}\widehat{\xi}^{l}) + \exp((\xi^{\mu})^{\top}\mathrm{neg}(\widehat{\xi}^{l}, i))\right) \\
\leq \sum_{k=1}^{r}\sum_{\mu \in \{1,...,M\} \backslash \{l\}}(\exp(N(1-2(\tau-\rho))) + \exp(N(1-2(\tau-\rho)+\frac{2}{N}))) \\
\leq 2M \exp(N(1-2(\tau-\rho)+\frac{2}{N}))
\end{split}
\end{align}

We will now consider two cases:
\paragraph{Case 1: $\widehat{\xi}^{l}(i) = \xi^{l}(i)$: \\ \\} 
In this setting, flipping the value of the ith dimension of the input vector increases its distance from the close neighbor. Therefore in this case we would like the energy change of the system to be positive (so that the flip does not occur).
From the Equation \ref{eq:exp_signal}, we obtain:
\begin{align}
\begin{split}
\mathbb{E}[E_{\mathrm{signal}}] \geq \frac{1}{r} \sum_{k=1}^{r} \left(\exp(N(1-2\rho))-\exp(N(1-2\rho)-2))\right) = \\
\exp(N(1-2\rho))(1-e^{-2})
\end{split}
\end{align}

Thus we obtain:
\begin{equation}
\mathbb{E}[\Delta(E_{\mathrm{rand}})] \geq \exp(N(1-2\rho))(1-e^{-2}) -  2M \exp(N(1-2(\tau-\rho)+\frac{2}{N}))   
\end{equation}

Therefore, if the following holds:
\begin{equation}
\label{eq:mu_one}
M \leq \exp(2N(\tau-2\rho))\frac{1-e^{-2}}{2e^{2}},    
\end{equation}
then $\mathbb{E}[\Delta(E_{\mathrm{rand}})] >0$.

\paragraph{Case 2: $\widehat{\xi}^{l}(i) = -\xi^{l}(i)$: \\ \\} 
In this setting, flipping the value of the ith dimension of the input vector decreases its distance from the close neighbor. Therefore in this case we would like the energy change of the system to be negative (so that the flip does not occur).
From the Equation \ref{eq:exp_signal}, we obtain:

\begin{align}
\begin{split}
\mathbb{E}[E_{\mathrm{signal}}] \leq \frac{1}{r} \sum_{k=1}^{r} \left(\exp(N(1-2\rho))-\exp(N(1-2\rho)+2))\right) = \\
\exp(N(1-2\rho))(1-e^{2})
\end{split}
\end{align}

Thus we obtain:
\begin{equation}
\mathbb{E}[\Delta(E_{\mathrm{rand}})] \leq \exp(N(1-2\rho))(1-e^{2}) +  2M \exp(N(1-2(\tau-\rho)+\frac{2}{N}))   
\end{equation}

Therefore, if the following holds:
\begin{equation}
\label{eq:mu_two}
M \leq \exp(2N(\tau-2\rho))\frac{e^{2}-1}{2e^{2}},    
\end{equation}
then $\mathbb{E}[\Delta(E_{\mathrm{rand}})] <0$. 
Note that the bound from Inequality \ref{eq:mu_one} is stronger than the one from Inequality \ref{eq:mu_two}.
That completes the proof of the first part of the theorem.

Now we will compute the variance of $\Delta(E_{\mathrm{rand}})$. Denote:
\begin{equation}
Z_{k} = \sum_{\mu \in \{1,...,M\}} (W^{\mu}_{k}-Z^{\mu}_{k})    
\end{equation}
Note that if $\omega_{1},...,\omega_{r}$ are chosen independently then $Z_{k}$ for $k=1,...,r$ are independent. The following is true:
\begin{align}
\begin{split}
\mathrm{Var}(\Delta(E_{\mathrm{rand}})) = 
\mathrm{Var}(E_{\mathrm{signal}} + E_{\mathrm{noise}})
= \mathrm{Var}\left(\frac{1}{r}\sum_{k=1}^{r}\sum_{\mu \in \{1,...,M\}}(W^{\mu}_{k}-Z^{\mu}_{k})\right) \\
= \mathrm{Var}(\frac{1}{r}\sum_{k=1}^{r}Z_{k}) = 
\frac{1}{r^{2}}\sum_{k=1}^{r}\mathrm{Var}(Z_{k}) = 
\frac{1}{r^{2}}\sum_{k=1}^{r}\mathrm{Var}\left(\sum_{\mu \in \{1,...,M\}}(W^{\mu}_{k}-Z^{\mu}_{k})\right) \\ = 
\frac{1}{r^{2}}\sum_{k=1}^{r}
\left(\mathbb{E}\left[\left(\sum_{\mu \in \{1,...,M\}}(W^{\mu}_{k}-Z^{\mu}_{k})\right)^{2}\right] - 
\left(\mathbb{E}\left[\sum_{\mu \in \{1,...,M\}}(W^{\mu}_{k}-Z^{\mu}_{k})\right]\right)^{2}
\right) 
\end{split}    
\end{align}
Therefore we have:
\begin{align}
\begin{split}
\label{eq:main-variance}
\mathrm{Var}(\Delta(E_{\mathrm{rand}})) = \frac{1}{r^{2}}\sum_{k=1}^{r}   
\left(
\sum_{\mu_{1},\mu_{2} \in \{1,...,M\}}\mathbb{E}[W^{\mu_{1}}_{k}W^{\mu_{2}}_{k}] +
\sum_{\mu_{1},\mu_{2} \in \{1,...,M\}}\mathbb{E}[Z^{\mu_{1}}_{k}Z^{\mu_{2}}_{k}]
\right)
\\ - 
\frac{2}{r^{2}}\sum_{k=1}^{r}\sum_{\mu_{1},\mu_{2} \in \{1,...,M\}}\mathbb{E}[W^{\mu_{1}}_{k}Z^{\mu_{2}}_{k}] \\
-\frac{1}{r^{2}}\sum_{k=1}^{r}
\left(
\sum_{\mu_{1},\mu_{2} \in \{1,...,M\}}\mathbb{E}[W^{\mu_{1}}_{k}]\mathbb{E}[W^{\mu_{2}}_{k}] +
\sum_{\mu_{1},\mu_{2} \in \{1,...,M\}}\mathbb{E}[Z^{\mu_{1}}_{k}]\mathbb{E}[Z^{\mu_{2}}_{k}]
\right) \\ - 
\frac{2}{r^{2}}\sum_{k=1}^{r}\sum_{\mu_{1},\mu_{2} \in \{1,...,M\}}\mathbb{E}[W^{\mu_{1}}_{k}]\mathbb{E}[Z^{\mu_{2}}_{k}]
\end{split}    
\end{align}

Note that from the fact that our random feature map based estimators are unbiased, we get (as we already noted before in Equation \ref{eq:exp} and put here again for Reader's convenience):
\begin{align}
\begin{split}
\label{eq:supp-variance}
\mathbb{E}[W^{\mu}_{k}] = \exp((\xi^{\mu})^{\top}\widehat{\xi}^{l}), \\
\mathbb{E}[Z^{\mu}_{k}] = \exp((\xi^{\mu})^{\top}\mathrm{neg}(\widehat{\xi}^{l}, i)), \\
\end{split}
\end{align}

Let us now define:
\begin{equation}
\Psi(\mathbf{x}) = D^{4}\exp(-2N)\exp(B\omega^{\top}\mathbf{x}-4\widehat{A}\|\omega\|^{2}_{2}).    
\end{equation}
Note that the following is true:
\begin{align}
\begin{split}
\label{eq:supp-variance2}
\mathbb{E}[W^{\mu_{1}}_{k}W^{\mu_{2}}_{k}] = \Psi(\xi^{\mu_{1}}+\xi^{\mu_{2}}+2\widehat{\xi}^{l}) \\   
\mathbb{E}[Z^{\mu_{1}}_{k}Z^{\mu_{2}}_{k}] = \Psi(\xi^{\mu_{1}}+\xi^{\mu_{2}}+2\mathrm{neg}(\widehat{\xi}^{l}, i)) \\  
\mathbb{E}[W^{\mu_{1}}_{k}Z^{\mu_{2}}_{k}] = \Psi(\xi^{\mu_{1}}+\xi^{\mu_{2}}+\widehat{\xi}^{l}+\mathrm{neg}(\widehat{\xi}^{l}, i))  \\  
\end{split}    
\end{align}

Thus it remains to find closed-form formula for $\Psi(\mathbf{x})$ for any given $\mathbf{x} \in \mathbb{R}^{N}$.

From the proof of Theorem 3.1 in \cite{crts}, we get for $A<0$:
\begin{equation}
\mathbb{E}[\exp(A\|\omega\|^{2}+B\omega^{\top}\mathbf{x})]
= (1-2A)^{-\frac{N}{2}}\exp\left(\frac{B^{2}}{2(1-2A)}\|\mathbf{x}\|^{2}\right)
\end{equation}
Thus we obtain:
\begin{equation}
\label{eq:supp-variance3}
\Psi(\mathbf{x}) = D^{4}\exp(-2N)(1+8\widehat{A})^{-\frac{N}{2}}
\exp\left(\frac{B^{2}}{2(1-8\widehat{A})}\|\mathbf{x}\|^{2}\right)
\end{equation}

Plugging to Equation \ref{eq:main-variance} formulae from Equation \ref{eq:supp-variance} and Equation \ref{eq:supp-variance2} and utilizing Equation \ref{eq:supp-variance3} for $\Psi$, we obtain the formula for the variance from the statement of the Theorem.
\end{proof}

\section{Experiment details}\label{app_sec:exp}

\subsection{Warm-up for Mnemosyne and other optimizers: additional results}
\label{sec:app_warmup}

\textbf{Preliminaries:}
At each timestep $t$, gradient $\nabla f(\mathbf{x}_{t})$ is input to the optimizer. The gradient is pre-processed as proposed in \citep{l2l-lstm}. Coordinate-wise Mnemosyne's using two temporal encoders is applied. The Mnemosyne's memory cell interfaces with the rest of the system similarly to any RNN-cell. Each cell uses exponential discount factor $\tau=0.1$, $r=16$ random projections, $16$ hidden dimensions and $1$ attention head. The memory cell output is fed to a fully connected layer, returning the update to be applied to the NN parameters of the optimizee. 

\textbf{Meta-training:} 
We refer to training optimizer's parameters $\theta$ as \textit{meta-training} to distinguish from the optimizee NN training. Mnemosyne's optimizer is meta-trained on MNIST classification task with $3$ small MLP and $3$ small ViT models. The optimizee MLPs are sampled from this hyperparameter distribution: $l\in[1,2]$ hidden layers of size in range $[20, 40]$ and $\mathrm{sigmoid}$ or $\mathrm{relu}$ activation function. The optimizee ViTs have $l\in[1,3]$ layers, $h\in[1,3]$ heads, with hidden dimension in range $[16, 64]$, mlp dimension in range $[16, 64]$ and head dimension in range $[8, 16]$. The optimizee task is to train the model for $100$ steps on batches of $64$ image-class examples. 
% \vspace{-4mm}
\paragraph{Hybrid loss function to improve generalization:} To promote generalization, we use the random-scaling trick proposed by~\citep{kaifeng}. Mnemosyne's optimizer is meta-trained by gradient descent using Adam optimizer with learning rate $\eta=3e^{-4}$ to minimize a combination of two loss functions. The first is the task loss given by the sum of optimizee losses in a truncated roll-out of $5$ MNIST training steps. The other one is an imitation loss given by the mean squared error between Mnemosyne's updates and expert-optimizer (Adam) updates for same inputs. Importantly, this imitation loss is different from the one proposed in ~\citep{clearning} which uses off-policy expert roll-outs for imitation. In our case, we provide expert supervision for the on-policy updates. This mitigates the problem of divergence from expert's trajectory, often observed in behaviour cloning. Our imitation loss acts as a regularizer which prevents Mnemosyne's optimizer from over-fitting on the optimizee task that it is trained on. We emphasize that expert's learning rate $\eta_{\mathrm{exp}}=3e^{-2}$ \textbf{was not obtained via any tuning process}.

Our optimizer model has minimal input feature engineering and our meta-training setup is significantly simpler than those considered in the literature~\cite{tasks-stab, clearning, kaifeng, olga}. Even so, we can successfully apply Mnemosyne's optimizer to a variety of tasks due to its efficient memory mechanism. Furthermore, Mnemosyne's memory cells can be easily combined with any of the existing L2L methods that use LSTMs for memory-encoding.

\textbf{Results:}
After meta-training, Mnemosyne's optimizer was tested on NN training tasks with different NN architectures and datasets. Recall that Mnemosyne only saw one ML task of MNIST classifier training for $100$ steps during meta-training.
Fig. ~\ref{fig:mlp} shows that Mnemosyne can optimize MLPs with different NN archtitectures and activation functions on MNIST image classifier training. Note that, Mnemosyne converges significantly faster than popular analytical optimizers, RMSprop and Adam while retaining similar asymptotic performance. Mnemosyne can train NNs for long horizons of thousands of steps while baseline LSTM optimizer~\cite{l2l-lstm} struggles to minimize classification loss beyond a few hundred steps.

\begin{figure*}\vspace{-2mm}
  \begin{center}
  \includegraphics[trim={0 0 1.8cm 0},clip,width=0.24\textwidth]{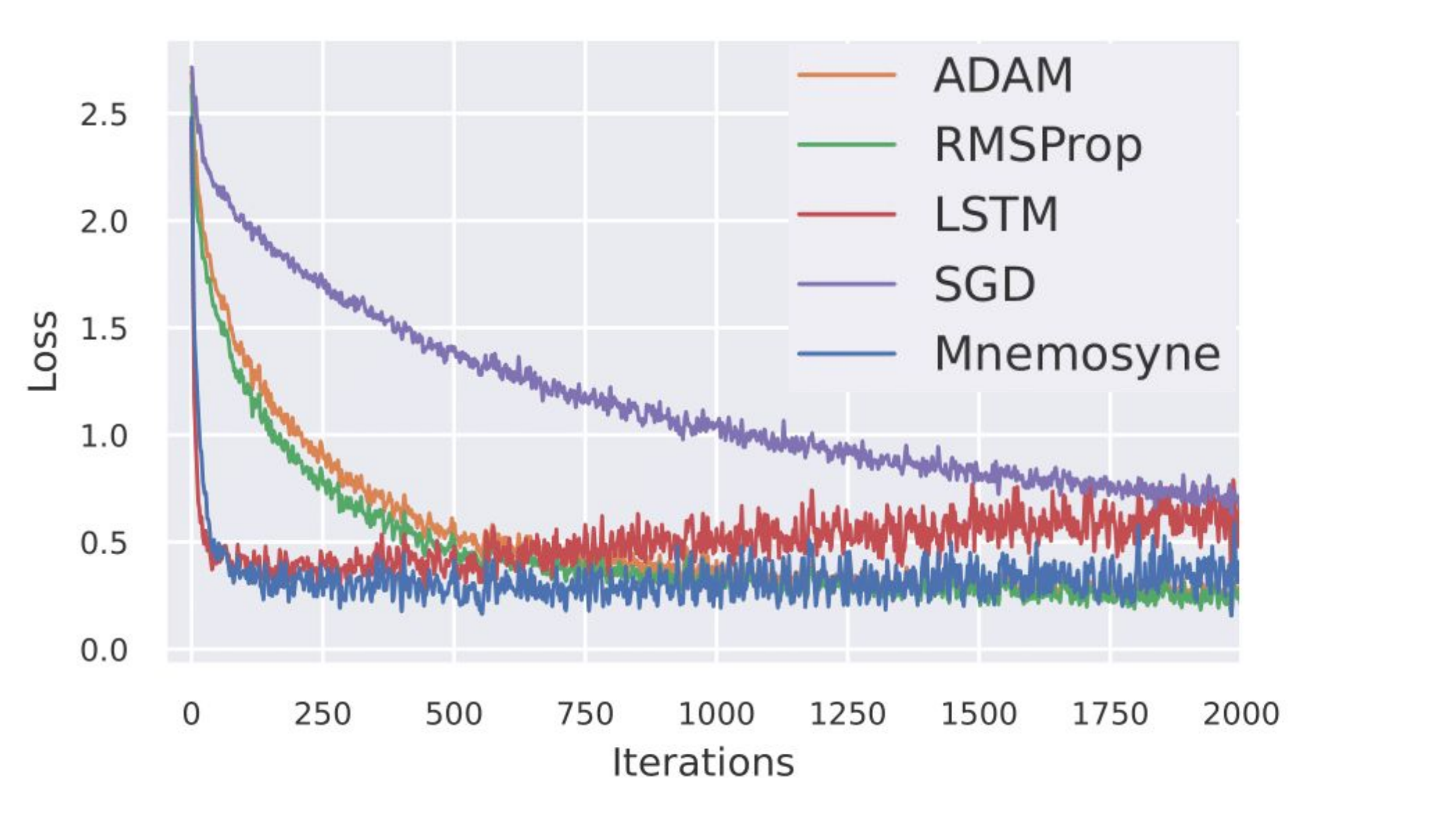}
  \includegraphics[trim={0 0 1.8cm 0},clip,width=0.24\textwidth]{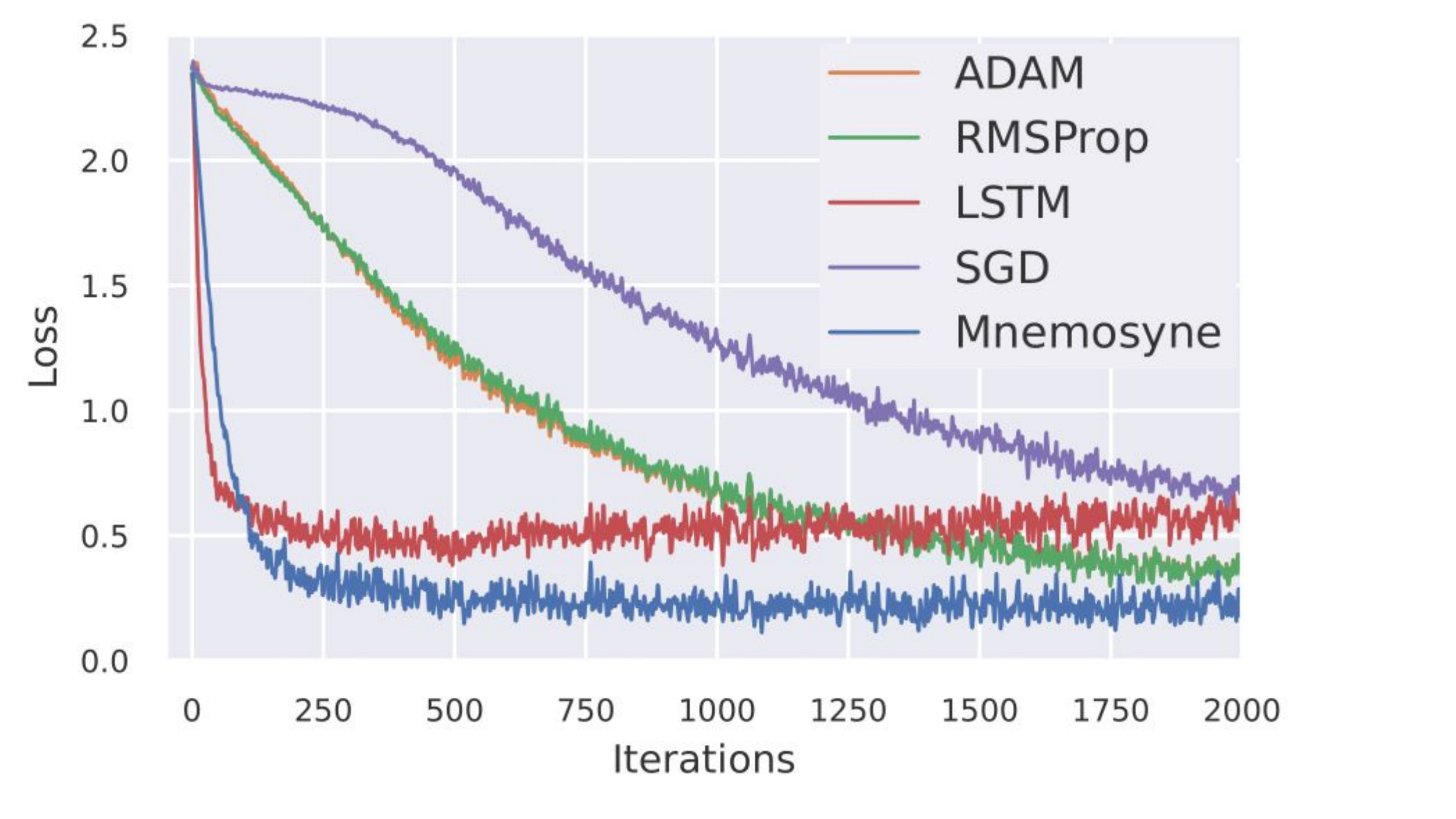}
  \includegraphics[trim={0 0 1.8cm 0},clip,width=0.24\textwidth]{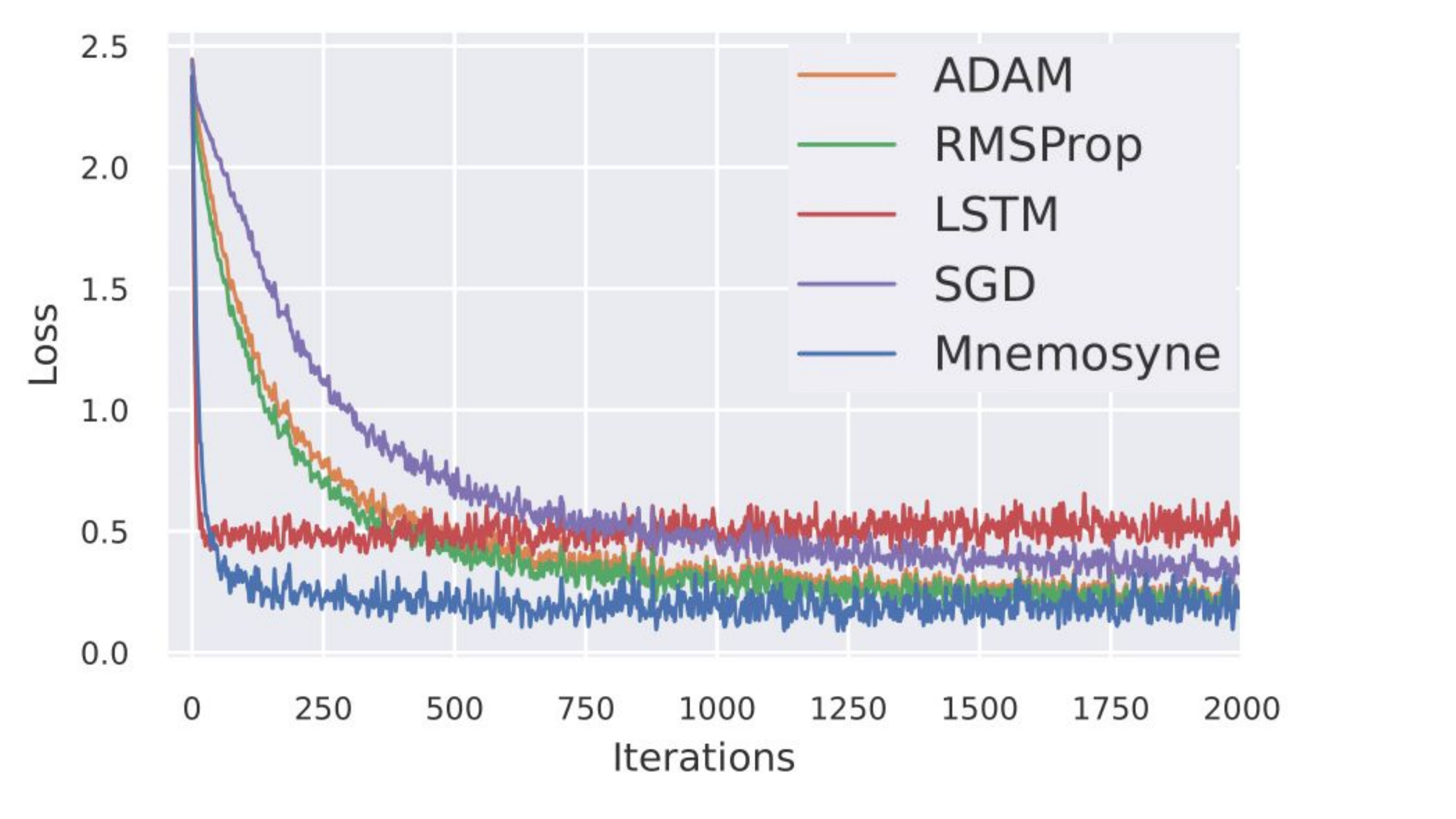}
  \includegraphics[trim={0 0 1.8cm 0},clip,width=0.24\textwidth]{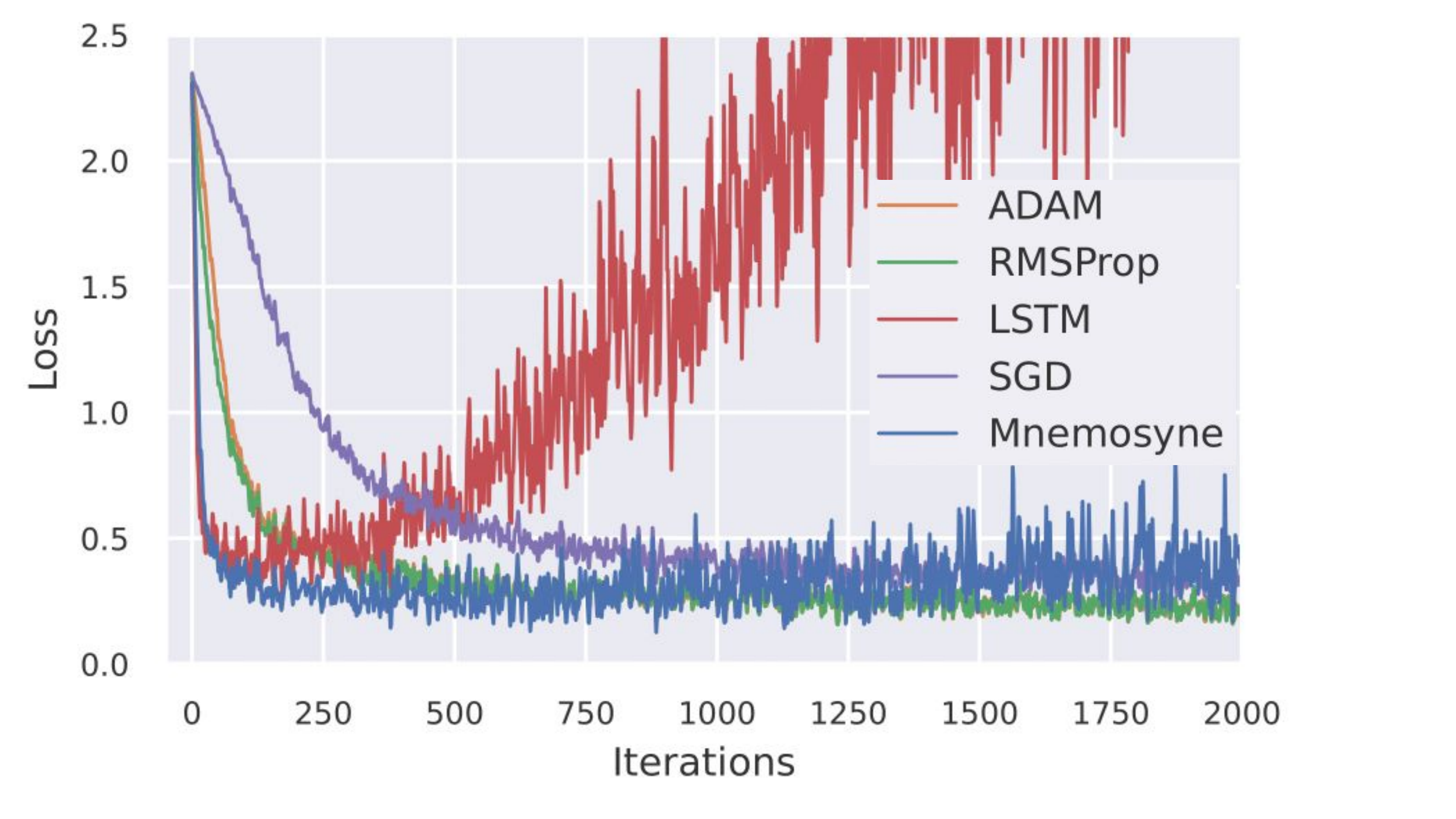}
  \end{center}
\vspace{-6mm}
\caption{\small{Validation loss curves when training MLP with Mnemosyne compared to other methods for MNIST image classification. Optimization curves for $4$ different MLP architectures in this order: ($1$ layer, $20$ hidden dim, $\mathrm{sigmoid}$ activation), ($2$ layers, $20$ hidden dim, $\mathrm{sigmoid}$ activation), ($1$ layer, $40$ hidden dim, $\mathrm{sigmoid}$ activation), ($1$ layer, $20$ hidden dim, $\mathrm{relu}$ activation) are shown.}}
\label{fig:mlp}
\end{figure*}
\begin{figure}
\centering
  \includegraphics[width=0.8\linewidth]{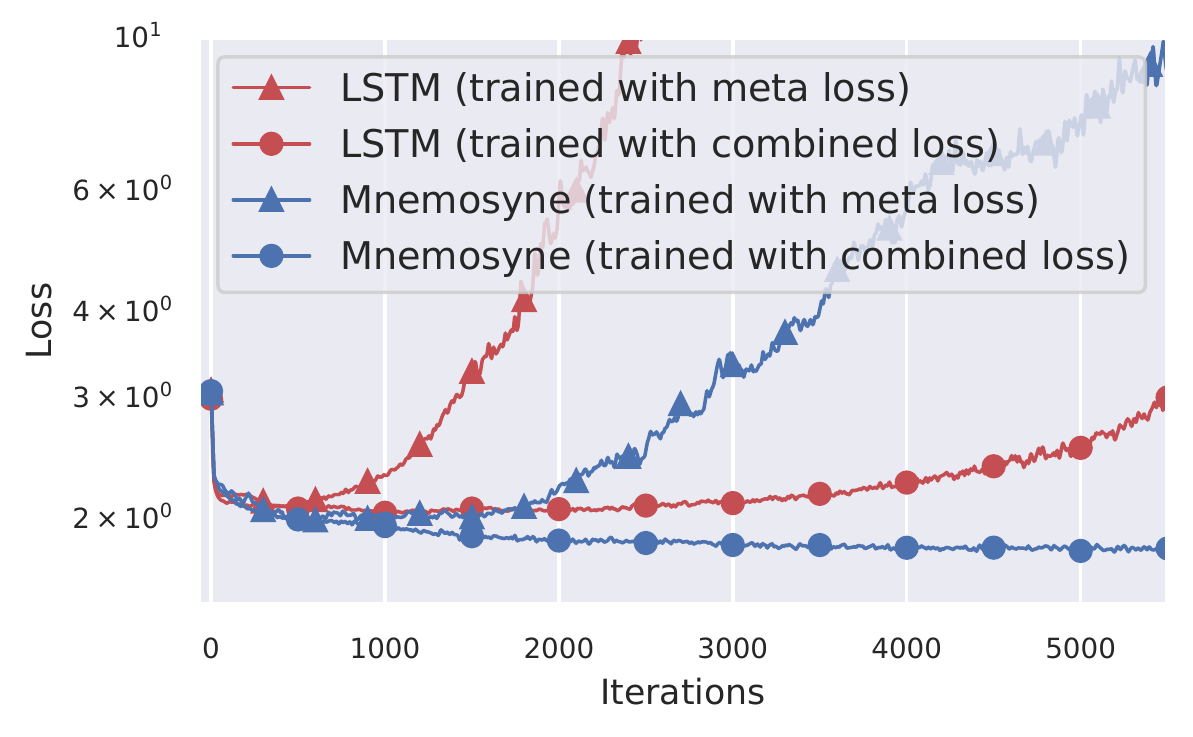}
\caption{\small{Impact of training the optimizer with combined meta loss and imitation loss can be seen in generalization to a long horizon rollout. All variants were trained only on length $100$ rollouts.}}
\vspace{-4mm}
\label{fig:imitation}
\end{figure}
\textbf{Transformers:} The results were already presented in the main body of the paper (see: Sec. \ref{sec:exp_warmup}). We want to add that, as for experiments from Fig. \ref{fig:mlp}, here Mnemosyne's optimizer is faster than standard analytical optimizers and much more stable than LSTM optimizer.
Fig.~\ref{fig:imitation} shows the benefit of using expert imitation-loss for long-horizon stability of the Mnemosyne's optimizer.

Our results on training Transformers with Mnemosyne naturally lead to the question of the role that Transformer-based optimizers can play in training Transformers architectures. It is well known that Transformer training requires nontrivial optimization techniques \cite{training-trans}, e.g. learning rate schedulers (for that reason SGD was replaced with Adam in Transformer-training). Furthermore, for larger architectures training is slow, often prohibitively (unless the model is trimmed down, for instance by replacing long-range attention modeling with local attention of the controllable attention radius). Attention-based optimizers can potentially address this problem, since they improve convergence (and thus effectively reduce training time) even if meta-trained on much simpler tasks as we show in Fig. \ref{fig:mnemofig-warmup}.

\begin{figure*}
  \includegraphics[width=0.33\textwidth]{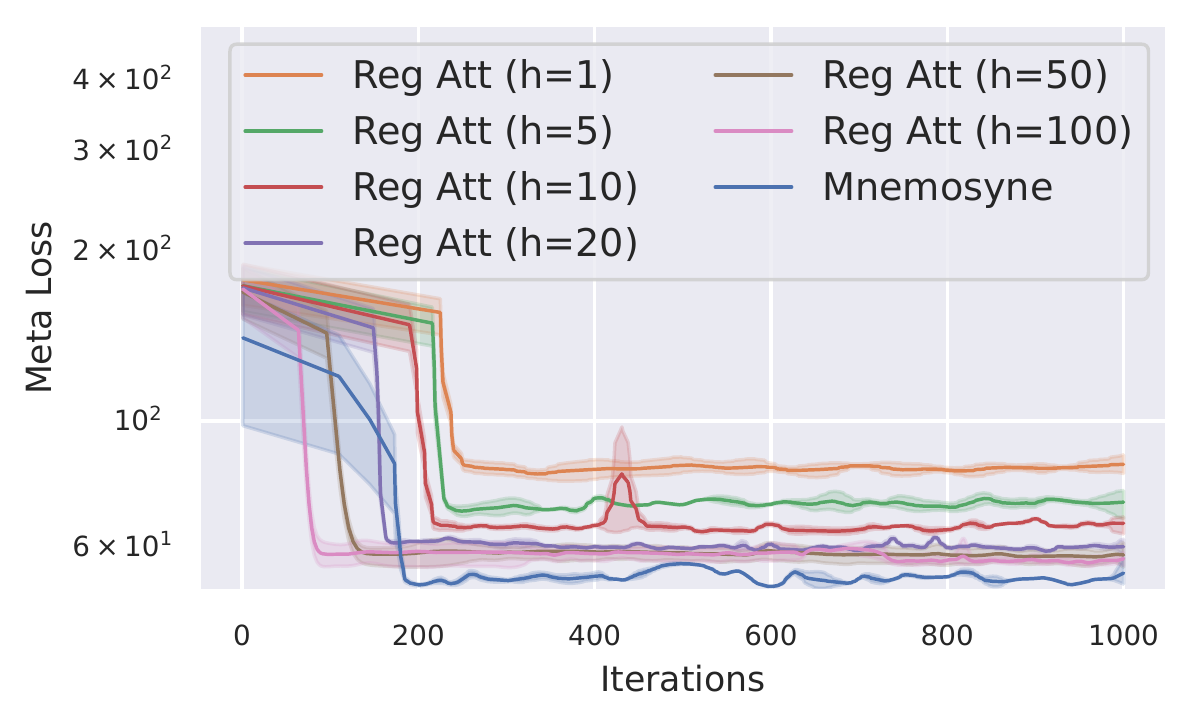}
  \includegraphics[width=0.33\textwidth]{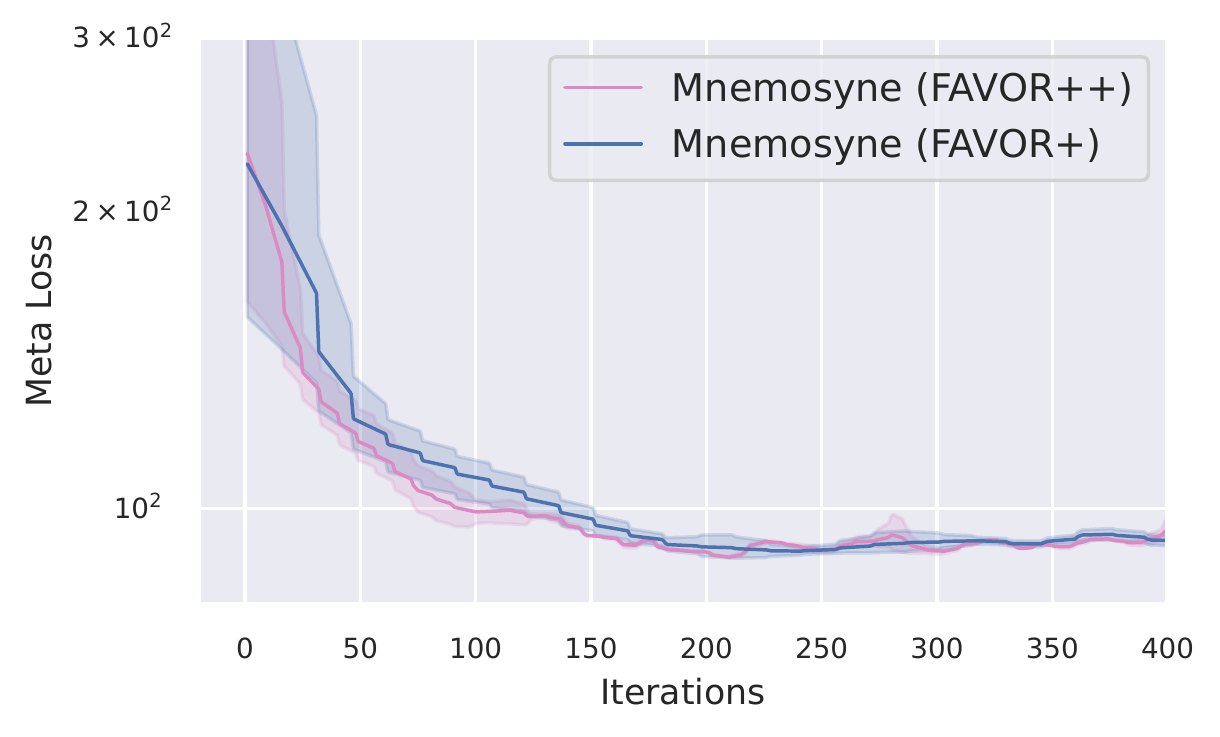}
  \includegraphics[width=0.33\textwidth]{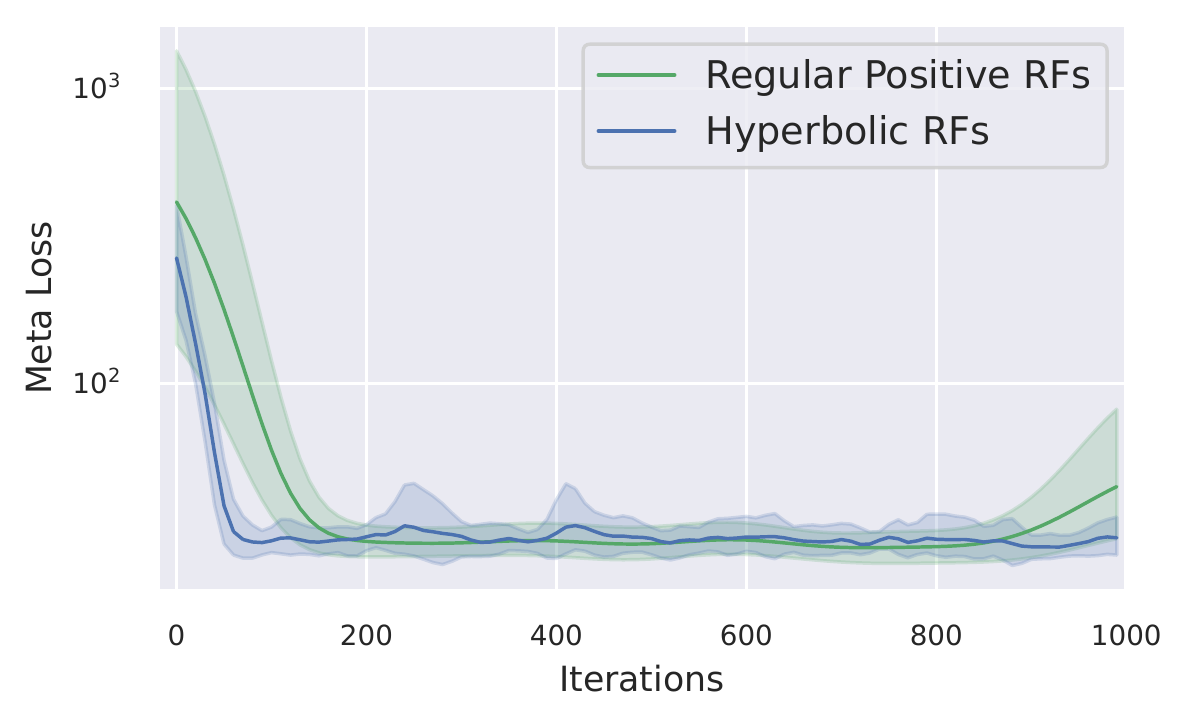}
\vspace{-8mm}
\caption{\small{Ablation Studies. \textbf{Left:} Comparison of the Mnemosyne's linear CAM with regular attention memory blocks with different history cache lengths ($h$). \textbf{Middle:} Meta-training curves of Mnemosyne optimizer with FAVOR+ and FAVOR++ mechanism for CAM. \textbf{Right:} Meta-training curves of Mnemosyne optimizer with different kernel transformation functions for CAM.}}
\label{fig:ablations}
% \vspace{-2.4mm}
\end{figure*}

\subsection{Mnemosyne's CAM mechanism vs regular attention}
\label{sec:app_bfattention}
We have tried to use regular Transformer blocks to encode associative memory for Mnemosyne's temporal module. For applying regular attention to online optimizer autoregressively, a limited-length cache of historical gradients has to be maintained. A self-attention map over the history sequence is generated and used to encode memory. Fig.~\ref{fig:ablations} (left) shows the meta-training curves for regular attention optimizers with different history cache lengths. As we increase the cache length, the performance improves and the memory requirement scales quadratically. Due to this limitation, we could not implement a regular attention based optimizer with cache length more than $100$. On the other hand, Performer's memory cell defining CAM can attend to theoretically unbounded history and out-performs regular attention variants with fixed memory requirement.
\subsection{Different RF-mechanisms: detailed look}
\label{sec:app_rfs}
Fig.~\ref{fig:ablations} (middle) compares the performance of Mnemosyne's optimizer applying FAVOR+ and FAVOR++ mechanisms in CAM. FAVOR++ mechanism provides strongest theoretical guarantees for the capacity of the associative memory model. It also leads initially to faster convergence, but asymptotically performs similarly as the FAVOR+ variant.  Due to the simpler implementation of FAVOR+, we use it for all experiments with Mnemosyne's optimizer.

Optimizers with both regular positive and hyperbolic random features kernel learn similarly, but the latter has much lower variance (see: Fig.~\ref{fig:ablations} (right)) and thus it became our default choice. 

\subsection{Ablations over different discount factors and number of RFs in CAM mechanism}
In Fig. \ref{fig:mnemofig-lambda-rf}, we present detailed ablation studies over discount factors $\tau$ as well as the number of random features applied by our default CAM mechanism leveraging hyperbolic cosine random features.
\begin{figure}
\centering
  \includegraphics[width=0.8\linewidth]{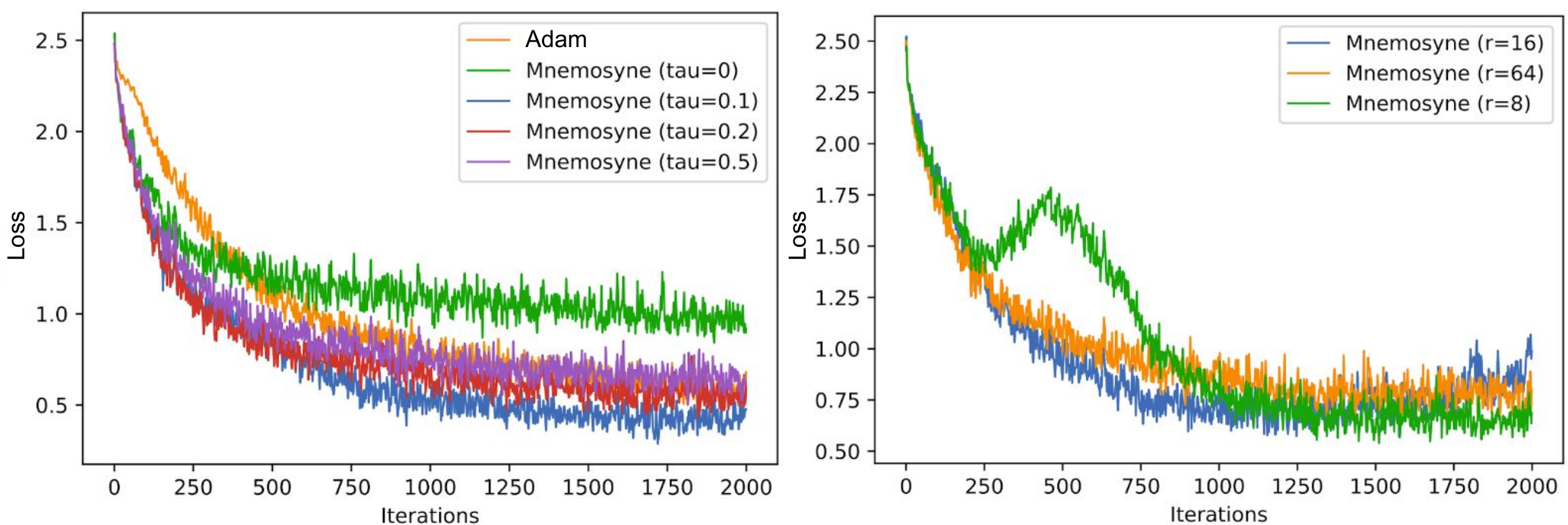}
\caption{\small{\textbf{Left:} The comparison of Mnemosyne applying different discount factors with $\mathrm{Adam}$ optimizer in meta-training (MLP optimization). \textbf{Right}: The comparison of Mnemosyne applying different number of random features in the hyperbolic cosine random feature mechanism used in CAM.}}
\vspace{-4mm}
\label{fig:mnemofig-lambda-rf}
\end{figure}
\label{sec:app_lambda-rf}
\subsection{Benchmarking different depths of the temporal module}
\label{sec:app_depth}
Finally, we run ablations over different number of temporal encoders in Mnemosyne's temporal block. We noticed that modest increase of the number of encoders improves loss in meta-training and meta-training very deep variants is particularly challenging (as requiring much more data). Since in this paper we decided to use simple meta-training strategies and furthermore increasing the number of temporal encoders did not lead to substantial gains, we decided to choose shallow temporal encoders' architectures. The results are presented in Fig. \ref{fig:mnemofig-depth}. 
\begin{figure}
\centering
  \includegraphics[width=0.8\linewidth]{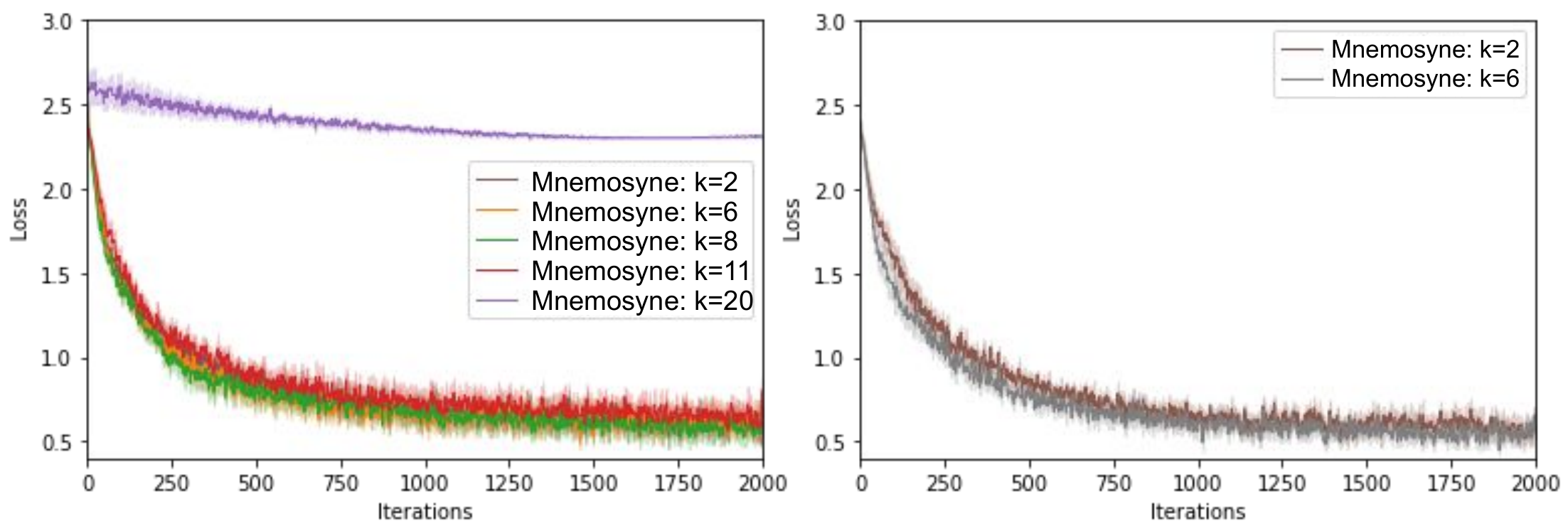}
\caption{\small{Comparison of the meta-training loss for Mnemosyne variants applying different number of temporal encoders $k$. Since several variants on the left figure performs similarly, on the right figure we highlight top two. The meta-training is conducted on the MLP optimization tasks and MNIST data.}}
\vspace{-4mm}
\label{fig:mnemofig-depth}
\end{figure}

\subsection{Compute Resources Used}
\label{sec:compute}
All Mnemosyne optimizer variants were trained and tested on a TPU pod containing $4$ \href{https://cloud.google.com/tpu/docs/system-architecture-tpu-vm#:~:text=TPU%20v3%20configurations%20provide%20significant,bound%20on%20TPU%20v3%20configurations.}{TPU v3 chips} with \href{https://jax.readthedocs.io/en/latest/notebooks/quickstart.html}{JAX}. Hundreds of rounds of training and inference were needed to compare different variations, tasks and meta-losses. 

\begin{figure}[h]
    \vspace{-4mm}
    \centering
        \includegraphics[width=0.99\textwidth]{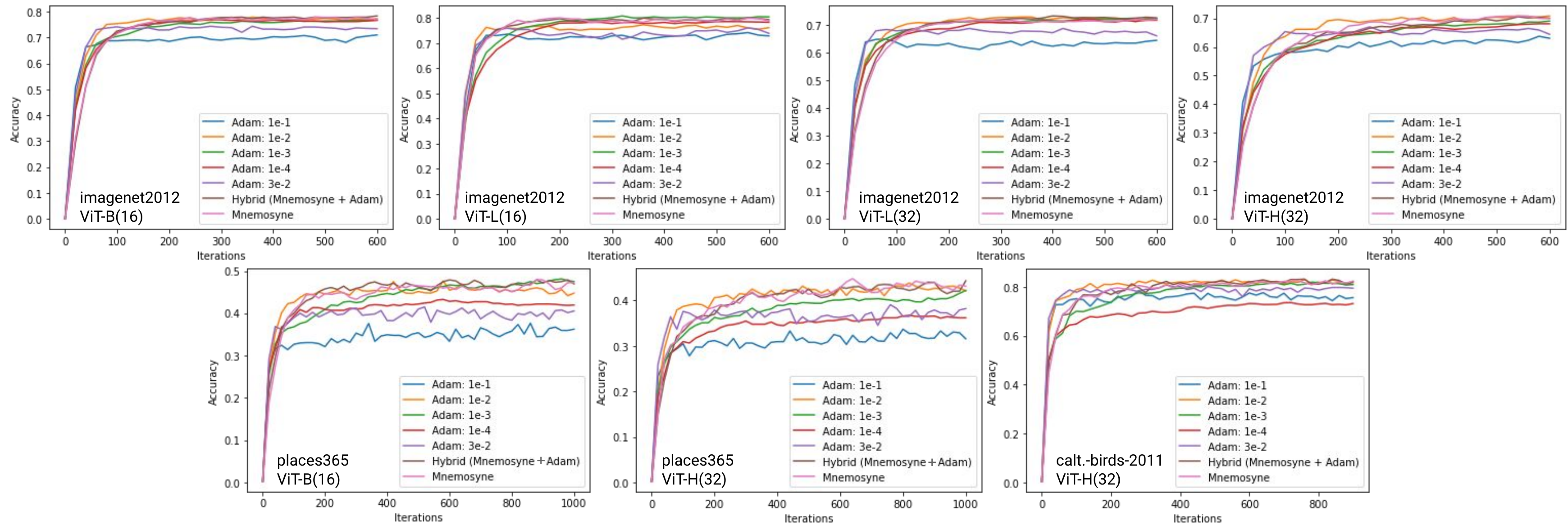}
    \caption{\small{Coordinate-wise Mnemosyne across different ViT architectures and datasets, as described in Sec. \ref{sec:coord-base}. Mnemosyne matches or outperforms optimal $\mathrm{Adam}$ variants without any hyperparameter tuning.}}
    \label{fig:coord-lastlayer-a}
    \vspace{-1.5mm}
\end{figure}
\begin{figure}[h]
    \vspace{-4mm}
    \centering
        \includegraphics[width=0.99\textwidth]{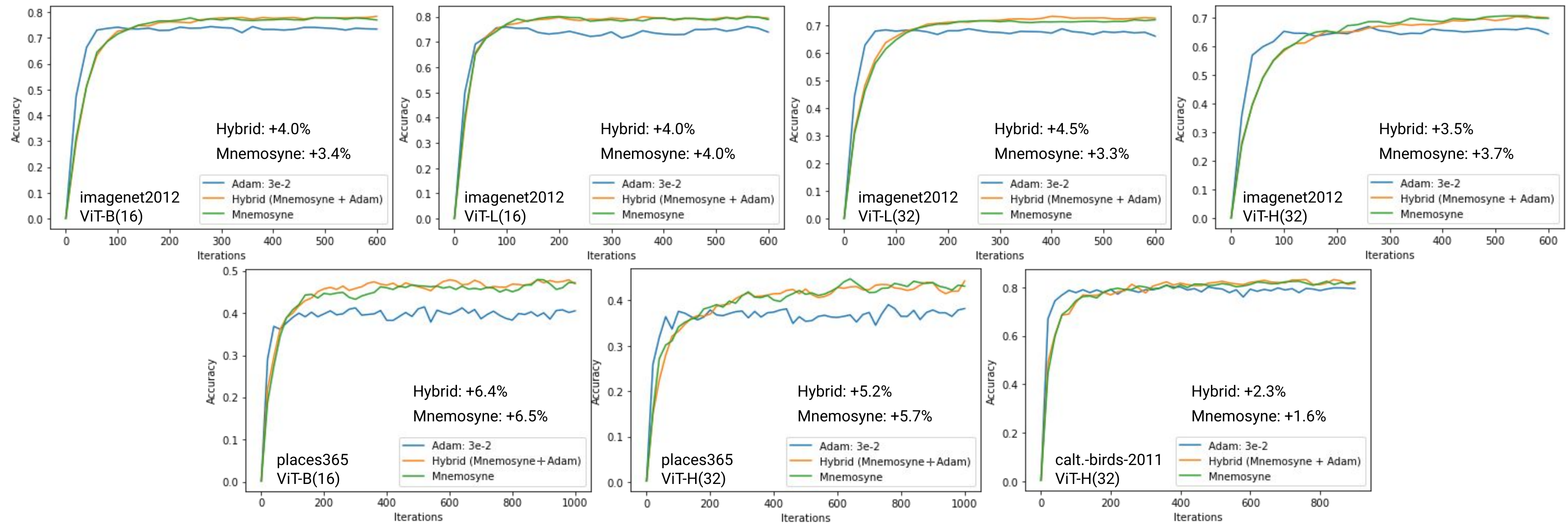}
    \caption{\small{The results from Fig. \ref{fig:coord-lastlayer-a}, but narrowed down to the comparison between two Mnemosyne variants and $\mathrm{Adam}$ optimizer applying learning used in meta-training of these variants of Mnemosyne. The expert substantially underperforms in all the cases (we explicitly put the gains coming from the two variants of Mnemosyne as compared to the expert variant). This shows that Mnemosyne does not learn to imitate the expert.}}
    \label{fig:coord-lastlayer-b}
    \vspace{-4mm}
\end{figure}

\subsection{Coordinate-wise Mnemosyne versus hard-coded optimizers for larger ViTs}
\label{app_sec:exp_coord}
\subsubsection{ViT last-layer fine-tuning}
\label{sec:coord-base}
In this study, we benchmarked Mnemosyne on different sizes of ViT architectures: ViT-Base, ViT-Large and ViT-Huge (ViT-B(x), ViT-L(x) and ViT-H(x) respectively, where $x$ defines the patch size), see: Tab:~\ref{tab:app_vit_param}. We used the coordinate-wise variant of the Mnemosyne. We run tests on the following datases: $\mathrm{imagenet2012}$, $\mathrm{places365}$ and $\mathrm{caltech}$-$\mathrm{birds}$-$2011$.
We were optimizing the last layer of the ViT-architecture and used $\mathrm{Adam}$ expert with learning rate $\eta=3e^{-2}$ as a regularizer (see: our discussion above on meta-training). The learning rate was not tuned in any way. In fact (as we show below) $\mathrm{Adam}$ optimizer applying this learning rate is characterized by the sub-optimal performance. We tried two versions of Mnemosyne: (a) a variant that solely optimizes the last layer of ViT (reported in the main body) and (b) the $\textit{hybrid}$ variant, where Mnemosyne  is used to optimize the weight-matrix of the last layer and $\mathrm{Adam}$ with learning rate $\eta=e^{-3}$, to optimize the bias vector. That learning rate  was also not tuned in any particular way and, as before, if applied purely within $\mathrm{Adam}$, produces sub-optimal results. The purpose of that last experiment was to assess how efficient the strategy of optimizing jointly with Mnemosyne and a hand-designed optimizer is. The results are presented in Fig. \ref{fig:coord-lastlayer-a} and Fig. \ref{fig:coord-lastlayer-b}. We see that: (a) Mnemosyne without any hyperparameter tuning matches or outperforms optimal $\mathrm{Adam}$ variants, (b) it also substantially outperforms $\mathrm{Adam}$ variant used as an expert in meta-training. This is valid for both: regular Mnemosyne as well as the hybrid version.

\subsubsection{ViT multi-layer fine-tuning}
Here, we used a \textit{light} version of coordinate-wise Mnemosyne using a single temporal encoder layer with hidden dimension $8$. This reduced the memory requirement of the Mnemosyne optimizer state. We fine-tune ViT-B model on CIFAR-100 dataset with batch size $128$. We were able to fine-tune last $2$ transformer layers along with the embedding, cls and head layers with Mnemosyne. Rest of the model was fine-tuned with $\mathrm{Adam}$ (learning rate = $1e^{-3}$). For comparison, the same baseline $\mathrm{Adam}$ variant is to fine-tune the complete model.
\begin{table}[h]
\small
\centering
\caption{Hyperparameters for the different ViT models used in this paper}
\label{tab:app_vit_param}
\begin{tabular}{@{}l c c c c c c c c@{}}
% \Xhline{3\arrayrulewidth}
\toprule
Model & Heads & Layers & Hidden Dim. & MLP Dim. & Params  & Patch Size  \\
\midrule
ViT-Base & 12 & 12 & 768 & 3072 & 86M & 16 \\
ViT-Large (16) & 24 & 16 & 1024 & 4096 & 307M & 16\\
ViT-Large (32) & 24 & 16 & 1024 & 4096 & 307M & 32\\
ViT-Huge & 32 & 16 & 1280 & 5120 & 632M & 32\\
\bottomrule
\end{tabular}
\end{table}

\subsection{Tensor-wise Mnemosyne versus hard-coded optimizers for ViT-H}
\label{app_sec:exp_tensor}

We finetuned the embedding, cls and top layer of ViT-H without the MLP-head (see Tab:~\ref{tab:app_vit_param} for hyperparameter) using tensorwise ($\sim 1M$ params), while the head was trained using Adam. The rest of the transformer parameters are fixed to the pre-trained value for all methods.  The batch size was set at 128 for all methods.

\subsection{Super-Mnemosyne: combining coordinate- and tensor-wise strategies for ViTs}
\label{app_sec:exp_super}
We finetuned the top-8 layers of the ViT-Base model (see Tab:~\ref{tab:app_vit_param}) along with the head, cls and embedding layer before we ran out of memory ie $\sim 50M$ parameters with a batch size of 256. Large tensor such as: a) the MLP block withing each layer, b) the head layer was finetuned using lite version of coordinate-wise. Rest of the tensors were finetuned using tensorwise. The bottom 4 layers of the model were kept fixed for Mnemosyne.
For Adam baselines we finetuned all layers.  

\subsection{BERT-pretraining NLP Transformers with Mnemosyne\label{app_sec:bert}}
We trained the Bert base model, whose Hyperparameters are shown in Tab:~\ref{tab:app_mlm_param}. The details of the training dataset used is shown in Tab:~\ref{tab:app_mlm_data}. We trained all parameters from scratch for all methods, with a batch size of 512. For the Mnemosyne results shown in Fig:~\ref{fig:bert_mlm}, we trained all parameters except the token embedding using Tensorwise Mnemosyne ($\sim 86M$ parameters). The token embedding was trained using Adam with learning rate $1e-4$. For Adam baseline we trained all parameters.

\begin{table}[h]
\small
\centering
\caption{Hyperparameters for the Bert base model}
\label{tab:app_mlm_param}
\begin{tabular}{@{}l c c c c c c c@{}}
% \Xhline{3\arrayrulewidth}
\toprule
Model & Heads & Layers & Hidden Dim. & MLP Dim. & Params & Compute & Loss    \\
\midrule
Bert-Base & 12 & 12 & 768 & 3072 & 110M & 4x2 TPUv3 & MLM \\
\bottomrule
\end{tabular}
\end{table}

% \begin{table}[h]
% \small
% \centering
% \caption{Hyperparameters for the Bert base models}
% \label{tab:app_mlm_param}
% \begin{tabular}{@{}l c r@{}}
% % \Xhline{3\arrayrulewidth}
% \toprule
% Parameter & & Value  \\
% \midrule
%  $\#$ of heads & & $12$ \\
%  $\#$ of hidden layers & & $12$  \\
%  Hidden layer size & & $768$  \\
%  $\#$ of tokens & & $512$ \\
%  Batch size & & $256$  \\
%  Loss & & MLM  \\
%  Activation layer & & gelu \\ 
%  Dropout prob & & $0.1$  \\
%  Attention dropout prob & & $0.1$ \\
%  Compute resources & & $4 \times 2$ TPUv3 \\
% \bottomrule
% \end{tabular}
% \end{table}

\begin{table}[h]\vspace{-3mm}
    \centering
    \small
    \centering
    \caption{Dataset used for pre training.}
    \label{tab:app_mlm_data}
    \begin{tabular}{@{}lrr@{}}
    \toprule
    Dataset & $\#$ tokens & Avg. doc len. \\
    \midrule
        Books \citep{zhu2015aligning} & $1.0$B & $37$K \\
        Wikipedia &  $3.1$B & $592$ \\
    \bottomrule
    \end{tabular}
    \vspace{1mm}
  \vspace{-3mm}
\end{table}

\subsection{Soft prompt-tuning massive T5XXL Transformers with Mnemosyne}
\label{app_sec:prompt}
We use coordinate-wise Mnemosyne to prompt-tune~\cite{soft-prompt-tuning} T5XXL model~\cite{raffel2020exploring} (see Table ~\ref{tab:app_t5xxl_param} for hyper-parameters) on SuperGLUE benchmark. Batch size $32$ was used. The length of the soft-prompt sequence was $30$ and each soft-prompt vector was of size $4096$, making the total number of trainable parameters $122880$.

\begin{table}[h]
\small
\centering
\caption{Hyperparameters for the T5XXL model}
\label{tab:app_t5xxl_param}
\begin{tabular}{@{}l p{1cm} p{1cm} p{1cm} p{1cm} p{1.5cm} p{1cm} p{1cm} p{1cm}@{}}
% \Xhline{3\arrayrulewidth}
\toprule
Model & Encoder Layers & Decoder Layers & Heads & Head Dim. & Embedding Dim. & MLP Dim. & Params & Compute    \\
\midrule
T5XXL & 24 & 24 & 64 & 64 & 4096 & 10240 & 11B & 2x2x4 TPUv3 \\
\bottomrule
\end{tabular}
\end{table}

\section{Mnemosyne for training initial optimization conditions}
\label{sec:init_cond_opt}

Mnemosyne can be also applied to learn initial conditions for the optimization.

In this section, our considered model of using $g_{\theta}$ is more general than the one presented in Eq. \ref{eq:x-update} and is of the form given below for $g_{\theta}=(g^{\mathrm{init}}_{\theta_{1}}, g^{\mathrm{hist}}_{\theta_{2}})$, $\theta=[\theta_{1};\theta_{2}]$ and a \textit{context-vector} $\mathbf{c}$:
\begin{equation}
\label{eq:x-update-2}
  \begin{cases} 
   \mathbf{x}_{0}=g^{\mathrm{init}}_{\theta_{1}}(\mathbf{c}),\\
   \mathbf{x}_{t+1}=g^{\mathrm{hist}}_{\theta_{2}}(f,\mathbf{x}_{0},...,\mathbf{x}_{t})  & \text{if } t > 0
  \end{cases}
\end{equation}
The optimizer $g$ is now explicitly split into two-parts: (1) $g^{\mathrm{init}}_{\theta_{1}}$ that learns initial optimization point from the context-vector, e.g. the image encoding the scene (as it is the case in learnable-MPC setting \cite{performer-mpc}), and (2) $g^{\mathrm{hist}}_{\theta_{2}}$ that processes the history of the optimization steps, as we described before. Depending on the application, either one or the other optimizer (or both) are turned on. Critically, both are encoded as light scalable Transformers. Optimizer $g^{\mathrm{init}}_{\theta_{1}}$ applies spatial bi-directional attention while $g^{\mathrm{hist}}_{\theta_{2}}$ uses the spatio-temporal variant, described in the main body of the paper.

Below we show how this paradigm can be used in Robotics to learn initial points for the MPC optimization. 

\vspace{-1mm}
\subsection{Spatial Mnemosyne for initializing MPC optimizers}
\label{sec:spatial}

\begin{figure*}
  \begin{center}
  \adjustbox{trim={.2\width} {.2\height} {0.2\width} {.2\height},clip}{\scalebox{-1}[1]{\includegraphics[width=0.21\textwidth]{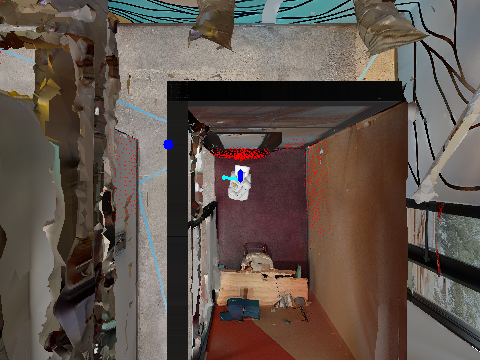}}}
  \adjustbox{trim={.2\width} {.2\height} {0.2\width} {.2\height},clip}{\scalebox{-1}[1]{\includegraphics[width=0.21\textwidth]{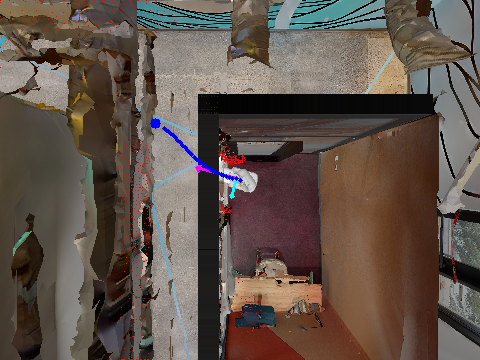}}}
  \adjustbox{trim={.2\width} {.2\height} {0.2\width} {.2\height},clip}{\scalebox{-1}[1]{\includegraphics[width=0.21\textwidth]{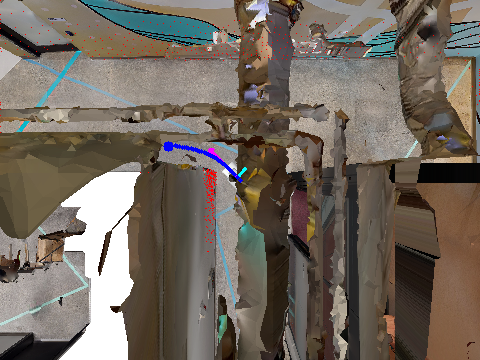}}}
  \adjustbox{trim={.2\width} {.2\height} {0.2\width} {.2\height},clip}{\scalebox{-1}[1]{\includegraphics[width=0.21\textwidth]{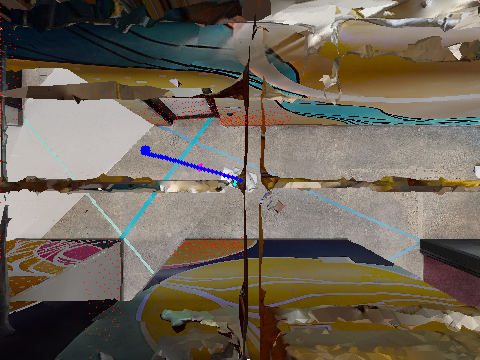}}}
  \adjustbox{trim={.2\width} {.2\height} {0.2\width} {.2\height},clip}{\scalebox{-1}[1]{\includegraphics[width=0.21\textwidth]{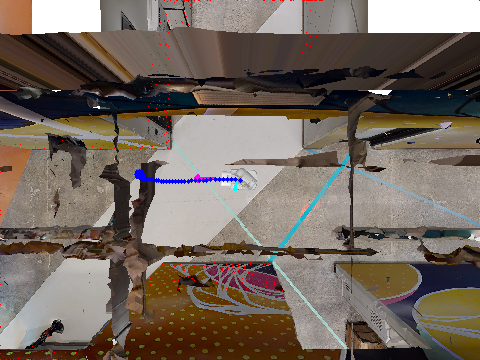}}}
  \adjustbox{trim={.2\width} {.2\height} {0.2\width} {.2\height},clip}{\scalebox{-1}[1]{\includegraphics[width=0.21\textwidth]{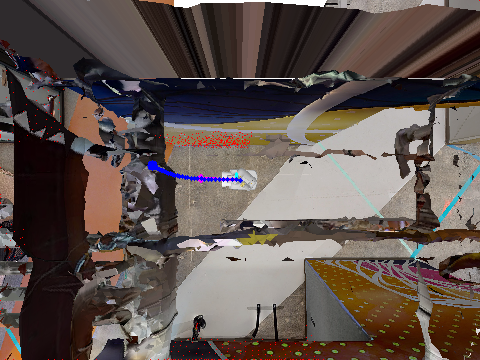}}}
  \adjustbox{trim={.2\width} {.2\height} {0.2\width} {.2\height},clip}{\scalebox{-1}[1]{\includegraphics[width=0.21\textwidth]{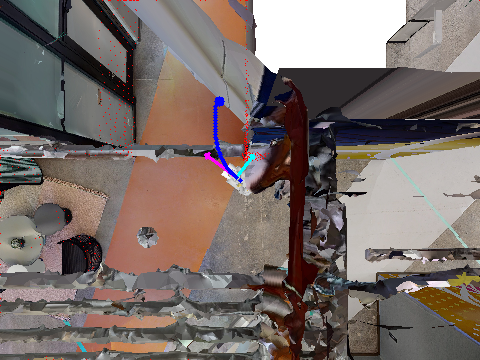}}}
  \end{center}
\vspace{-4mm}
\caption{\small{Navigation in a photo-realistic environment with MPC policy using Mnemosyne optimizer for initializing SQP solver.}}
\label{fig:mpc_nav}
\vspace{-3mm}
\end{figure*}

\textbf{Preliminaries:}
We applied Mnemosyne with bidirectional spatial attention for learning trajectory optimizers to be used for wheeled robot navigation in complex, photo-realistic simulated environments. The robot uses vision sensors for observing an occupancy grid of the environment and navigates using linear and angular velocity control. Navigation in challenging environments requires efficient high-speed robot control. Model Predictive Control (MPC) presents an efficient approach to the navigation problem, provided that the motion-planning with the environment model can be carried out within computational real-time limits \cite{performer-mpc}. Motion planning with efficient trajectory optimizers such as iterative Linear Quadratic Regulator (iLQR)~\cite{ilqr} is one way to implement MPC. However, in challenging layouts with narrow corridors and doors and with conservative safety and collision avoidance constraints, iLQR struggles to converge to the optimal trajectory. Sequential Quadratic Programming (SQP) \cite{singh2021optimizing} is often a more robust optimizer that can handle difficult non-linear constraints. However, SQP is significantly, sometimes $\sim10$ times, slower than iLQR and hence cannot be deployed in real-time settings. Deep Learning models have been shown to accelerate SQP by warm starting the optimization process \cite{ichnowski2020deep}. We learn Mnemosyne's optimizer to imitate SQP behaviour and initialize it with an approximately optimal trajectory as a starting point from which SQP refines to an optimized and feasible trajectory.

\textbf{Training details:}
Mnemosyne's optimizer, $g^{init}_{\theta_{1}}$, receives current robot pose $p_r$, a goal pose $p_g$ and a visual occupancy grid as the context, $\mathbf{c}$. The occupancy grid is processed by an image encoder module, a ViT where the attention mechanism is approximated by bidirectional Mnemosyne memory. As in a ViT, the occupancy grid is first pre-processed by a convolution layer and then flattened to a sequence.  Each element (token) of the sequence corresponds to a different $5\times5$ patch of the original frame which is then enriched with positional encoding. The pre-processed input is then fed to $3$ Mnemosyne attention and MLP layers of hidden dimension $64$. The final embedding of one of the tokens is chosen as a latent representation of the occupancy grid, $l_{oc}$. $p_r$, $p_g$ and $l_{oc}$ are concatenated and processed by an MLP which outputs the predicted action trajectory, $\mathbf{x}_{0}$.
% of size $L \times D$ where, $L$ is the prediction horizon and $D$ is the control dimensionality.

An offline dataset of SQP optimization examples is collected by running MPC navigation agent in $2787$ different environments. Each navigation run has $180$ steps on average. For each MPC step, one instance of trajectory optimization with SQP was run and a pair of input context $\mathbf{c}$ and the final optimal trajectory $\mathbf{x}_{T}$ was recorded. A total of $\sim500,000$ training examples were collected. 
% The dataset was split into training ($90\%$) and test sets ($10\%$). 
Mnemosyne's optimizer was trained with supervised learning on the SQP dataset by minimizing mean squared error between the SQP optimal trajectories  and the predicted trajectories.

\textbf{Results:}
After training, the predicted trajectory from Mnemosyne was used to initialize SQP optimization. Without Mnemosyne initialization, SQP optimization was capped at maximum $10$ iterations. It took on average $4.78$ iterations and $0.12$sec for the SQP solution to complete. With Mnemosyne initialization, SQP is only run for $1$ iteration to reach the optimal trajectory. 
SQP generates a trajectory that satisfies kinematic, dynamic and safety constraints for the robot which transformer alone e.g.~\citep{brohan2022rt} cannot natively guarantee.
% Since we are testing in highly constrained environments, the Mnemosyne output could not be deployed directly. 
% Running $1$ SQP iteration ensures that the output trajectory is feasible under all safety constraints. 
This reduces the optimization time by more than half to $0.048$sec on average which is under real-time constraint. It includes $0.011$sec for Mnemosyne inference and the rest for SQP iteration. A sequence of snapshots during navigation with Mnemosyne-SQP optimizer in a sample environment is shown in Fig.~\ref{fig:mpc_nav}. 

%Videos of simulated robot navigation can be seen on our %\href{https://sites.google.com/view/mnemosyne-opt}{project page}.

\end{document}